\documentclass{article}

\usepackage[preprint]{neurips_2026}

\usepackage[utf8]{inputenc} % allow utf-8 input
\usepackage[T1]{fontenc}    % use 8-bit T1 fonts
\usepackage{hyperref}       % hyperlinks
\usepackage{url}            % simple URL typesetting
\usepackage{booktabs}       % professional-quality tables
\usepackage{amsfonts}       % blackboard math symbols
\usepackage{nicefrac}       % compact symbols for 1/2, etc.
\usepackage{microtype}      % microtypography
\usepackage{xcolor}         % colors

\usepackage{graphicx}
\usepackage{tikz}
\usepackage{algorithm}
\usepackage{algpseudocode}
\usepackage{xspace}
\usepackage{mathtools}
\usepackage[super]{nth}
\usepackage{amsmath}
\usepackage{amsthm}
\usepackage{amsfonts}
\usepackage{amssymb}
\usepackage[disable]{todonotes}

\usepackage{subcaption}
\usepackage{bm}

\theoremstyle{plain}
\newtheorem{theorem}{Theorem}[section]
\newtheorem{definition}[theorem]{Definition}
\newtheorem{lemma}[theorem]{Lemma}
\newtheorem{corollary}[theorem]{Corollary}

\newcommand{\smallsep}{\newline\rule{0em}{\baselineskip}}
\newcommand{\safeMEB}{\ensuremath{\mathrm{safeMEB}}\xspace}
\newcommand{\MEB}{\ensuremath{\mathrm{MEB}}\xspace}
\newcommand{\TB}{\ensuremath{\mathrm{TB}}\xspace}
\newcommand{\convexhull}{\ensuremath{\mathrm{Conv}}\xspace}

\title{Practical Validity Conditions for \\Byzantine-Tolerant Federated Learning}

\author{%
  M\'elanie Cambus \\
  Aalto University, Finland\\
  \texttt{melanie.cambus@aalto.fi} \\
  \And
  Darya Melnyk\thanks{Supported by German Research Foundation (DFG), SPP 2378, project ReNO-2 (511099228), 2025-2029} \\
  Technical University Berlin \\
  \texttt{melnyk@tu-berlin.de} \\
  \AND
  Tijana Milentijević\footnotemark[1] \\
  Technical University Berlin \\
  \texttt{tijana.milentijevic@tu-berlin.de} \\
  \And
  Stefan Schmid\footnotemark[1] \\
  Technical University Berlin \\
  \texttt{stefan.schmid@tu-berlin.de} \\
}

\begin{document}

\maketitle

\begin{abstract}
  Robust aggregation is the core operation in Byzantine-tolerant federated learning. To ensure the quality of aggregation independently of data distribution or attacks, validity conditions are needed. They provide geometric guarantees of where the output of the aggregation must lie. The widespread convex validity requires the output to lie in the convex hull of the honest vectors. Although this guarantee is strong in theory, it is poorly suited to modern federated learning systems, as it has dimension-dependent resilience and excludes many practical aggregation rules.
We introduce the minimum enclosing ball (\MEB) validity condition for robust aggregation, as well as its multiplicative relaxation, $c$-\MEB validity, where $c$ is a constant. We show that exact \MEB validity still suffers from limited resilience, while relaxed $c$-\MEB validity is achievable if a majority of clients is honest, i.e. $n>2t$. We give an optimal MinMax-\MEB rule for the relaxed condition with the bound $c<\sqrt{2}$ and prove explicit relaxed-\MEB guarantees for standard aggregators including minimum-diameter averaging, medoid and geometric median. Finally, we relate \MEB validity to convex, relaxed-convex and box validity studied in prior literature, thus providing a systematic map of geometric validity conditions for Byzantine-robust aggregation. 
Our results show that relaxed \MEB validity connects validity conditions in distributed computing and Byzantine-tolerant aggregation rules, and offers a practical alternative to convex validity.
%\MEB validity bridges the gap between theoretical guarantees and practical robust aggregation by allowing controlled deviations from the convex hull, while remaining close to honest nodes. 
\end{abstract}

\section{Introduction}

%Vector aggregation is an important subroutine in distributed machine learning (ML), where several nodes collaborate to build an ML model without sharing the training data. Distributed ML is necessary when, e.g., the training data cannot be shared among the nodes for privacy reasons, or when the amount of training data is too large to be stored by a single node. In the literature, several distributed ML approaches have been suggested. One alternative, which is particularly well-suited when the stochastic gradient descent method is used to optimize the model, is to let the nodes agree on a common gradient before they update their local copy of the model parameters. Another option is to let the nodes build a common view by directly exchanging the model parameters. In either case, the update of local gradients or parameters can be seen as an execution of vector consensus. The quality of the final ML model can suffer if one or more nodes use a poisoned dataset, if one or more nodes fail to compute the gradients correctly, or if a failure in the communication network corrupts some of the exchanged data. To address these issues, distributed ML algorithms that can tolerate Byzantine nodes, i.e., nodes that deviate arbitrarily from the protocol, have been proposed in the literature.

Federated Learning (FL) systems aggregate high-dimensional updates computed by many clients~\cite{mcmahan2017communication}. 
When some clients are \textit{Byzantine}~\cite{lamport2019byzantine} and exhibit unplanned or even malicious behavior, the server can no longer trust the received vectors, since malicious clients may send arbitrary updates~\cite{yin2018byzantine, kairouz2021advances, 10.1145/3616537, 10.5555/3780338.3780384, allouah2025adaptive}. Also benign failures or poisoned data can influence aggregation. As distributed learning is increasingly deployed in critical domains such as healthcare~\cite{nguyen2022federated} or banking~\cite{long2020federated}, robustness to Byzantine clients has become a central concern. 
Robust aggregation rules are therefore used to ensure that the vector applied by the server remains representative of the honest (i.e., non-Byzantine) clients. A central question is how to define this requirement independently of data distribution, or attacks.

This can be formalized using \emph{validity conditions}, which specify where the aggregation output is allowed to lie as a function of the honest input vectors. Validity conditions are hence a foundation for Byzantine-tolerant FL. 
%Before analyzing the convergence of a learning algorithm, one needs to understand whether the aggregation step can output a vector that is geometrically consistent with the honest updates. 
When all nodes have identically distributed data, nodes that show Byzantine behavior can be viewed as outliers~\cite{yin2018byzantine,NIPS2017_f4b9ec30,  guerraoui2018medoid}. This makes it possible to filter out such nodes in the training and achieve agreement close to the desired output (typically the average of non-faulty gradients). However, if the data is distributed heterogeneously, the quality of the ML model may drop if outliers are removed from the training process. 
Therefore, researchers have recently made great efforts to develop aggregation methods for heterogeneous data, including trimmed mean~\cite{pmlr-v80-yin18a, 10.5555/3540261.3542179}, minimum diameter averaging (MDA)~\cite{10.5555/3540261.3542179}, geometric median~\cite{9721118, spaa-geom-median}, medoid~\cite{guerraoui2018medoid, bhowmick2022resilient, xie2018generalized}, Krum~\cite{NIPS2017_f4b9ec30,  xie2018generalized,wang2025federated}, and coordinate-wise median~\cite{pmlr-v80-yin18a, asia-ccs-coord-median}. These aggregation rules restrict the output region in different ways, and hence provide different notions of aggregation quality.

%Aggregation of vectors under Byzantine adversaries has also been considered from the perspective of distributed computing. For a long time, however, the focus in distributed computing was to find algorithms for agreeing on scalars, and multi-dimensional agreement was viewed as a generalization of multi-valued agreement executed in every dimension. 

The classical validity condition for multidimensional vector aggregation is convex validity~\cite{VectorConsensusAsynch, VectorConsensus, mendes2015multidimensional}, which requires the output to lie in the convex hull of the honest clients' inputs. Although this validity condition provides a strong geometric guarantee, it is not suitable for modern FL systems: In $d$ dimensions, it can tolerate at most $t < n/(d+1)$ Byzantine clients, which is not suitable for high-dimensional models.
% Additionally, as shown by the seminal works of \citet{VectorConsensusAsynch}, \citet{VectorConsensus}, and \citet{mendes2015multidimensional}, convex validity condition cannot be satisfied if algorithms are executed in each dimension independently. 
%This validity condition requires the output of an agreement algorithm to be inside the convex hull of all honest nodes. 
%Soon after these seminal papers were published, convex validity has been extended to various network models~\cite{fugger_et_al:LIPIcs.DISC.2018.27,ghinea2025convex} and was successfully applied for approximate agreement on graphs~\cite{nowak_et_al:LIPIcs.DISC.2019.29}. 
%In practical applications, researchers often apply methods from robust statistics. They compute the geometric median, medoid, minimum diameter... Methods such as the median have been shown to have a breaking point of $1/2$, meaning that they can withstand half of corrupted samples. \todo[inline]{really need to read up on robust statistics and see how Rachid and others motivate the aggregation rules that they use}
%While convex validity provides strong geometric guarantees, it also imposes a strict requirement on the number of Byzantine nodes $t$, namely that: $t<n/(d+1)$. This is is too restrictive in practical applications, as modern systems have high-dimensional inputs.
%However, for practical applications that apply vector consensus in $\mathbb{R}^d$, convex validity is not suitable due to the low number of Byzantine failures that it can tolerate ($t<n/(d+1)$). 
Therefore, practical algorithms for Byzantine-tolerant vector aggregation violate convex validity by explicitly allowing the output to leave the convex hull of all honest vectors. Note that the non-faulty vectors may lie in a lower-dimensional space, while the Byzantine vectors are placed in the full-dimensional space.  %under data heterogeneity, honest updates may lie close to a low-dimensional structure, while Byzantine vectors may introduce additional directions. 
Practical aggregators may still output useful vectors outside the convex hull of the honest updates, as long as the output remains close to the honest population. Thus, convex validity is often too strict to explain the behavior of robust aggregation in federated learning.
%In practice, these methods have shown to be effective for applications such as Federated Learning \cite{8950073}, distributed optimization \cite{su2016fault} and robust sensor fusion in cyber-physical systems \cite{8669811}. 

This work develops a systematic validity framework for Byzantine-tolerant vector aggregation. We introduce a novel validity condition --- the \emph{minimum enclosing ball} (\MEB) validity, which requires output to lie in the smallest enclosing ball of the honest vectors. Unlike convex validity, \MEB validity defines a fully-dimensional ball of all non-faulty vectors and naturally captures aggregation outputs that leave the convex hull, while remaining within the geometric scale of the honest updates. However, we show that \MEB validity is still too strict, as it also suffers from resilience limitations that depend on the geometry of the input. This motivates a relaxed version, called $c$-\MEB validity, where the radius of the honest minimum enclosing ball is increased by a multiplicative factor $c$.
Our results show that relaxed \MEB validity provides a useful bridge between validity conditions in distributed computing and Byzantine-tolerant aggregation rules. Overall, \MEB validity offers a practical alternative to convex validity, ensuring that aggregation remains close to the honest clients.
%, while accommodating the behavior of modern robust aggregation rules. 

%We want to emphasize that this work is investigating validity conditions on a fundamental level, while putting the specific agreement protocols to which these conditions are applied in the background. The results in this paper can therefore be seen as if they were presented in a client-server model where a central server aggregates vectors and decides on an output based on a given validity condition. Observe that the client-server model can be simulated using authenticated interactive consistency for agreement protocols. Transferring the results to the original setting of multidimensional approximate agreement, where the convex validity has first been presented, is possible, but it is beyond the scope of this work.

%\subsection{Our contribution}

%In this paper, we introduce a practical, robust and efficient-to-compute validity conditions: The \MEB validity allows the nodes to agree inside a minimum enclosing ball of all non-faulty vectors; Its relaxation, the  $c$-relaxed \MEB validity, allows agreement inside the minimum enclosing ball, whose radius has been increased by a factor of $c$. 

\noindent\textbf{Our contributions:}

\begin{enumerate}
    \item \textbf{A validity framework for robust aggregation.}
    We introduce \MEB validity and its multiplicative relaxation, $c$-\MEB validity, where $c$ is a constant. These conditions provide full-dimensional geometric guarantees on the aggregation output and are motivated by the behavior of practical Byzantine-tolerant aggregation rules.

    \item \textbf{Limits of exact \MEB validity.} We show that exact \MEB validity requires agreement inside the intersection of all candidate minimum enclosing balls, analogously to the safe area for convex validity~\cite{VectorConsensus}. We prove a lower bound showing that exact \MEB validity still suffers from poor resilience in high-dimensional settings.
    
    \item \textbf{Optimal solution for $\bm{c}$-relaxed \MEB validity.} 
    We present MinMax-\MEB, an optimal aggregation rule for $c$-\MEB validity, that is based on solving a minimization problem in Section~\ref{sec:optimal} under $n>2t$. We also present an upper bound for this algorithm, showing that the relaxation factor can be upper bounded by $c<\sqrt{2}$. For $d=2$, this factor is tight.
    
    \item \textbf{Validity guarantees for practical aggregators.} In Section \ref{sec:existing_algorithms}, we show that existing algorithms, including MDA, medoid, and geometric median, satisfy $c$-relaxed \MEB validity, where $c$ is at most a constant. Interestingly, the presented relaxation factor for these algorithms depends on the ratio between $t$ and $n$, and it is smaller, the smaller the ratio. 

    %\item \textbf{Impossibility results and lower bounds for \MEB validity.} 
    %We show that agreement inside the \MEB requires the nodes to agree inside the intersection of all possible \MEB. This is a similar condition to agreeing inside the safe area for convex validity~\cite{VectorConsensus}. In Section~\ref{sec:lower_bounds} we present a lower bound for \MEB validity, showing that this validity condition also suffers from low resilience of up to $t<n/(d+1)$ Byzantine nodes. Observe that the lower bound for convex validity cannot be extended to \MEB validity, as the minimum enclosing balls in the lower bound of~\cite{VectorConsensus} would have a common intersection. We therefore present a new construction that relies on intersections of subsets of minimum enclosing balls, and relies on the fact that some of the balls in this construction must have a considerably larger radius than others.
    
    \item \textbf{A systematic comparison of validity notions.} 
    We compare \MEB validity to existing validity conditions, such as convex validity, $(\delta,p)$-relaxed convex validity~\cite{xiang_et_al:LIPIcs.OPODIS.2016.26}, and box validity~\cite{10.1007/978-3-032-11127-2_10} in Section~\ref{sec:relations}. This clarifies which validity conditions imply others and puts \MEB validity into perspective with existing vector aggregation conditions.
\end{enumerate}

\section{Background and Related Work}
%\todo[inline]{todo, make more FL oriented}
% \noindent\textbf{Applications of Vector Consensus\footnote{Here we use vector consensus to describe the process where nodes have input vectors in multiple dimensions and need to agree on a common output vector. Note that there is a different notion of vector consensus compared to~\cite{10.1145/277697.277772,1524949}, where the entries of the vector represent single-valued outputs for each node.}.} In the areas of distributed optimization, federated and collaborative learning, agreement algorithms are used as a subroutine when different gradients or model parameters need to be aggregated in the optimization or training process. 
\noindent\textbf{Aggregation in federated learning.}
The quality of an aggregation vector in federated learning is usually measured in terms of absolute distances from e.g. the average~\cite{10.5555/3540261.3542179}, in terms of resilience~\cite{ NIPS2017_f4b9ec30, 10.1145/3465084.3467902}, or robustness~\cite{pmlr-v206-allouah23a}. In~\cite{10.1145/3465084.3467902} the $(f,\varepsilon)$-Byzantine resilience is defined which requires the output to be at a distance of at most $\varepsilon$ from the average of honest nodes. In~\cite{NIPS2017_f4b9ec30}, $(\alpha,f)$-Byzantine resilience is tied to the moments of the output function, which are controlled by the moments of the honest gradient estimator. In~\cite{pmlr-v206-allouah23a}, $(f,\kappa)$-robustness is defined based on the averages of every subset of $n-t$ nodes, which also includes the Byzantine nodes.
Proposed algorithms for the above measures apply relatively simple filtering options of the inputs, such as GradFilter~\cite{10.1145/3465084.3467902} or the Reliable Broadcast --- Trimmed Mean~\cite{10.5555/3540261.3542179}, they use robust aggregation methods, such as medoid~\cite{guerraoui2018medoid}, coordinate-wise median~\cite{pmlr-v80-yin18a, asia-ccs-coord-median}, minimum diameter averaging (MDA)~\cite{10.5555/3540261.3542179}, or geometric median~\cite{9721118, spaa-geom-median}, and they apply combinations of these rules, see for example Krum~\cite{NIPS2017_f4b9ec30}, Multi-Krum~\cite{guerraoui2018medoid}, soft medoid~\cite{geisler2020reliable} and trimmed soft medoid~\cite{11126666}.

%The medoid idea is used in the context of federated learning (may need to find some more original paper) \cite{10.1145/3616537, NIPS2017_f4b9ec30}

\noindent\textbf{Validity conditions for agreement protocols.}
Validity conditions in agreement protocols ensure that the output space is restricted based on the inputs of the nodes. This makes validity conditions similar to the quality measures in Byzantine-tolerant vector aggregation discussed above. %While the measures are similar to validity conditions, the aggregation methods vastly differ from the algorithms considered for agreement protocols in distributed computing.
Traditional validity conditions have been defined for binary input values~\cite{Bracha87,10.1145/800221.806707,10.1145/322186.322188,10.1145/301308.301368, 9546489, 10.1145/3293611.3331591}. 
% In one-dimensional binary agreement, strong validity is the most common validity condition applied in the literature~\cite{Bracha87,10.1145/800221.806707,10.1145/322186.322188}. This validity condition, sometimes also referred to as unanimity, requires the output to be one of the honest nodes' inputs. 
% A weaker version of this condition, the weak validity condition, was considered for applications in $k$-set agreement~\cite{10.1145/301308.301368} and BFT protocols~\cite{9546489, 10.1145/3293611.3331591}. This validity condition only applies if the honest nodes are unanimous, and only if there are no faults in the system. 
In the more practical multi-valued setting (e.g., with input values in $\mathbb{R}$), the correct-proposal validity condition~\cite{10.1145/872035.872066,SIU1998157} has been considered. This condition requires the output to be one of the honest inputs. 
The convex validity condition was first introduced in the one-dimensional setting to solve approximate agreement~\cite{10.1145/5925.5931}. This validity condition is typically also applied in the multi-valued setting, and it requires the output to be in the range of the honest values.
Other stronger validity conditions than convex validity have appeared in the literature, see for example median validity~\cite{MedianValidity} or interval validity~\cite{intervalValidity}.
A comprehensive comparison of validity conditions for one-dimensional agreement is provided in~\cite{10.1145/3583668.3594567}.

Observe that one-dimensional validity conditions can be applied coordinate-wise to satisfy validity in multiple dimensions~\cite{xiang_et_al:LIPIcs.OPODIS.2016.26,  10.1007/978-3-032-11127-2_10, intervalValidity}. The corresponding validity condition is referred to as box validity in this work.
Convex validity in multiple dimensions has first been mentioned in \cite{VectorConsensusAsynch} and \cite{VectorConsensus}, as well as in the journal version~\cite{mendes2015multidimensional}. In this set of work, the output has to be inside the convex hull of the honest input vectors. Using Helly's theorem, the authors show that, to satisfy convex validity, it is sufficient to find a vector in the intersection of the convex hulls of all subsets of $n-t$ vectors. They also provide lower bounds showing that $t<n/(d+1)$ is a necessary condition for convex validity. 
To make the validity condition more practical, relaxations of convex validity have been considered~\cite{xiang_et_al:LIPIcs.OPODIS.2016.26}. Let $CH$ denote the convex hull of all honest vectors. In the $(\delta,p)$-relaxed vector agreement, the output of an algorithm can be inside $\{u\,\big\vert\,\Vert u - v \Vert_p \le \delta\ \ \forall\ v\in CH \}$~\cite{xiang_et_al:LIPIcs.OPODIS.2016.26}.

% Convex validity also appears in generalizations of approximate agreement to graphs and, more generally, convexity spaces~\cite{nowak_et_al:LIPIcs.DISC.2019.29}.
% An overview of applications of convex validity in other network models can be found in~\cite{ghinea2025convex}. Other similar applications appear in 
% lattice agreement in the shared memory~\cite{10.1007/BF02242714,10.1137/S0097539700366000} and the message passing models~\cite{10.1145/2332432.2332458, zheng_et_al:LIPIcs.DISC.2018.41}

In contrast to previous work, the \MEB validity conditions from this paper are directly motivated by the practical algorithms, such as MDA~\cite{10.1007/978-3-032-11127-2_10}, that allow the output to be outside of the convex hull of honest vectors, but still bound the distance by the maximum diameter of the honest vectors.

\section{Model}\label{sec:model}
We consider the distributed client-server model where the server aggregates input vectors $v\in\mathbb{R}^d$ (gradients or model parameters) of its $n$ nodes (clients) over reliable communication links. The goal of the server is to decide on an agreement vector that best represents the inputs of the non-faulty nodes. 

We assume that up to $t<n/2$ of the nodes can be faulty and show arbitrary behavior. In particular, we assume that these nodes are Byzantine: They know the input vectors of all other nodes, as well as the agreement algorithm, and they are allowed to collaborate. However, Byzantine nodes are node faults and thus cannot occupy other nodes in the network. We use $H$ to denote the set of honest, i.e., non-faulty, nodes. Note that $|H|\ge n-t$.

We can view the agreement problem as an aggregation method that takes the inputs of the nodes and outputs a single vector. For the Byzantine nodes, we assume that their collected vectors do not have to correspond to their input vectors. To estimate the quality of the aggregation vector, we will consider distances between vectors. 
%To make sure that agreement is not established on an arbitrary vector, we use validity conditions.  
In this work, we will measure the distance between vectors in terms of the Euclidean distance, which is defined as follows:
\begin{definition}[Euclidean Distance]
For any $v, v' \in \mathbb{R}^d$, the Euclidean distance between $v$ and $v'$ is
    $\|v-v'\|_2 = \sqrt{ \sum_{i = 1}^d (v_i - v'_i)^2}$, where  $v_i$ is the projection of $v$ on coordinate $i\in [d]$.
\end{definition}

\noindent\textbf{Vector aggregation with \MEB validity.}
In the following, we formally define the \MEB and $c$-relaxed \MEB validity conditions. 
%In Section~\ref{sec:lower_bounds}, we present the lower bounds for the validity conditions. Section~\ref{sec:optimal} presents an optimal algorithm to solve vector aggregation with $c$-relaxed \MEB validity and presents an upper bound on $c$. Finally, in Section~\ref{sec:existing_algorithms}, we discuss the relaxation factors from the practical algorithms in the literature. Finally, in Section~\ref{sec:relations}, we compare the introduced \MEB validity conditions to the existing validity conditions for vector aggregation.

%\subsection{Definition of \MEB validity}\label{sec:def_MEB}

%Now, we define the minimum enclosing ball and show that it has useful properties.

\begin{definition}[Minimum Enclosing Ball (\MEB)~\cite{smallest-enclosing-ball}]
    Given a set of vectors $S$ in $\mathbb{R}^d$, the minimum enclosing ball $\MEB(S)$ is the smallest ball in $\mathbb{R}^d$ containing all vectors in $S$. 
\end{definition}

In $\mathbb{R}^d$, the \MEB can be defined with up to $d+1$ vectors. Observe, however, that only two vectors may be sufficient to define the ball. It has been shown in the literature that \MEB can be computed efficiently, using, for example, Welzl's algorithm~\cite{welzlsMEBalgo}. An important property of the minimum enclosing ball is that it is unique. This allows us to use this ball to define a new validity condition:

\begin{definition}[\MEB validity]
    \MEB validity condition requires that the output vector of each non-faulty node must lie inside the minimum enclosing ball of the input vectors of all non-faulty nodes denoted by \MEB.
\end{definition}
Since the \MEB is unique, the presented \MEB validity condition is well defined. This would not be the case for the diameter of honest vectors (the maximum distance between any two honest vectors), which is a widespread baseline used in practical applications. However, the radius of \MEB can be used to lower bound the diameter of honest vectors, and the diameter of \MEB can be used as an upper bound. 

The exact \MEB validity is still too strong for high-dimensional applications: as we show in Section~\ref{sec:lower_bounds}, it suffers from resilience limitations that depend on geometry of the input space, similarly to convex validity. This motivates a relaxed version of \MEB validity. The goal of the relaxation is to exploit the honest majority threshold $n>2t$, in which practical Byzantine-tolerant aggregation rules perform. 

\begin{definition}[$c$-\MEB validity]
    Let $H$ be the set of input vectors of honest nodes, and let \MEB denote their minimum enclosing ball with center in $C_{\MEB}^*$ and radius $r^*$. An algorithm satisfies $c$-\MEB validity if every non-faulty node outputs a vector $y$ such that $\|y - C_{\MEB}^*\|_2 \le c\cdot r^*$.
\end{definition}
For $c=1$, this condition reduces to exact \MEB validity. We will refer to $c$ as the relaxation factor in this work, and sometimes call the validity condition the $c$-relaxed \MEB validity condition.

\paragraph{Implication for honest-gradient bias.}
A useful consequence of $c$-\MEB validity is that it directly bounds the deviation of the aggregation rule from the average honest update. Let $\bar h = \frac{1}{|H|}\sum_{h\in H} h$ denote the average of the honest client updates and let $C^*_{\mathrm{MEB}}$ and $r^*$ be the center and radius of the honest minimum enclosing ball. Since $\bar h$ lies in the convex hull of the honest updates and the convex hull is contained in the \MEB, we have $\|\bar h - C^*_{\mathrm{MEB}}\|_2 \le r^*$.
Therefore, any output $y$ satisfying $c$-\MEB validity obeys $\|y-\bar h\|_2\le\|y-C^*_{\mathrm{MEB}}\|_2+\|C^*_{\mathrm{MEB}}-\bar h\|_2\le(c+1)r^*$.
Thus, $c$-\MEB validity can be interpreted as a distribution-independent bound on the bias of the aggregate relative to the honest average. This connects the geometric validity condition to standard analyses of Byzantine-tolerant federated optimization, where the effect of aggregation is often controlled through the deviation from the honest average update.

In order to satisfy \MEB validity conditions, we will show that it is necessary to agree inside the intersection of \MEB of all possible $n-t$ subsets of vectors. This area is defined analogously to the safe area for convex validity~\cite{VectorConsensus}:

\begin{definition}[\safeMEB]
    Let $S$ be a set of vectors in $\mathbb{R}^d$, with $|S|\ge n-t$. Then, the safe area for \MEB validity, denoted \safeMEB, is defined as:$$\safeMEB_t = \bigcap\limits_{T\subseteq S, |T|=n-t} \MEB(T).$$
    For $c$-\MEB validity, we define the $c$-\safeMEB area analogously, by replacing the intersection of minimum enclosing balls with the intersection of balls with centers in $C^*_{\MEB(T)}$ and radius $c\cdot r^*$.
\end{definition}

%\section{Possibility and impossibility for \MEB validity}

\section{Necessary conditions for \MEB validity}\label{sec:lower_bounds}

In this section, we first present the necessary conditions that are needed to satisfy \MEB and $c$-\MEB validity, respectively. We then present an impossibility result showing that \MEB validity cannot be satisfied if $n\le(d+1)t-d$.

\begin{theorem}\label{thm:safeAgreementNecessary}
    \MEB validity requires agreement inside $\safeMEB_t$.
\end{theorem}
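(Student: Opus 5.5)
The argument is the standard indistinguishability argument used for the safe area in convex validity~\cite{VectorConsensus}, transplanted to minimum enclosing balls. Consider any execution in which the server (or a non-faulty node) receives the multiset $S$ of $n$ vectors. At most $t$ of these may come from Byzantine nodes, and their identity is unknown to the aggregator. For every subset $T\subseteq S$ with $|T|=n-t$ there is a consistent scenario: the nodes contributing $T$ are honest, and the remaining $t$ nodes are Byzantine that simply sent the other vectors of $S$. In each of these scenarios the aggregator sees exactly the same input $S$, so a deterministic aggregation rule produces the same output $y$.

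The proof has three steps. First, I will fix an arbitrary $T$ of size $n-t$ and exhibit the scenario where the honest set has input multiset exactly $T$. This is allowed by the model: Byzantine nodes know all inputs and may send arbitrary vectors, in particular the remaining elements of $S$. Second, I invoke \MEB validity in that scenario. Because the \MEB is unique, the condition is well defined, and it forces $y\in\MEB(T)$. Third, since $y$ is the same across all scenarios and $T$ was arbitrary, $y\in\bigcap_{|T|=n-t}\MEB(T)=\safeMEB_t$.

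I also want to cover scenarios in which more than $n-t$ nodes are honest, i.e.\ $|H|>n-t$. Restricting to subsets of size exactly $n-t$ is sufficient for necessity, because the scenarios with exactly $t$ Byzantine nodes are always admissible and already force the intersection. The same argument, with the balls of radius $c\cdot r^*$ centered at $C^*_{\MEB(T)}$ in place of $\MEB(T)$, yields the $c$-\safeMEB statement. Here $r^*$ should be read as the radius of $\MEB(T)$ in scenario $T$.

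The step that needs care, though it is not deep, is the indistinguishability claim. I must argue that the aggregator's output depends only on the received vectors. If the aggregator is randomized or the node identities carry information, I will instead require validity with certainty for each scenario. Assignments of vectors to node identities can be permuted freely, since Byzantine nodes may be any $t$ of the nodes, so identity cannot reveal which $T$ is honest. With this in place the intersection bound follows immediately.
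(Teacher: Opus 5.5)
Your proposal is correct and uses the same indistinguishability argument as the paper. The paper argues by contradiction: it takes a ball $\MEB(T)$ that misses the output and swaps the roles of honest and faulty nodes so that $T$ becomes the honest set, whereas you directly note that every $(n-t)$-subset $T$ is a consistent honest set giving the same view, which forces the output into each $\MEB(T)$ and hence into $\safeMEB_t$; your remarks on node identities, randomization, and the $c$-relaxed version are harmless refinements of this.
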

\begin{proof}
    Consider an agreement algorithm execution where all $t$ nodes are faulty and assume that the faulty nodes follow the protocol with their own input vectors chosen in a worst-case manner. Observe that such an execution is indifferentiable from an execution where all nodes are honest. In the following, we will prove the theorem statement by contradiction. Assume that the agreement vector lies outside of some minimum enclosing ball $\MEB_i$. Consider the $n-t$ vectors for which this ball is the \MEB. If these input vectors are all from honest nodes, then the algorithm trivially violates the \MEB validity condition. Assume next that there are faulty vectors among the $n-t$ vectors that span $\MEB_i$. We can construct an identical execution of the algorithm where we swap the roles of the honest and the faulty nodes, such that $\MEB_i$ is defined with honest nodes only. Observe that, since the faulty nodes correctly followed the protocol, such a swap is always possible. Since both considered executions are identical to the algorithm, the algorithm has to output a vector outside on $\MEB_i$ in both cases. This would lead to a contradiction also in the second case. 
\end{proof}

The previous proof has not relied on the position of the \MEB, but only on their intersection properties. Therefore, the above proof also works for $c$-\MEB validity:

\begin{corollary}
    $c$-\MEB validity for $c\ge 1$ requires agreement in the $c$-$\safeMEB_t$.
\end{corollary}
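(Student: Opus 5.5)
The plan is to repeat the indistinguishability argument of Theorem~\ref{thm:safeAgreementNecessary}, with every exact minimum enclosing ball replaced by its $c$-inflation. For a subset $T$ with $|T|=n-t$, write $B_c(T)$ for the ball with center $C^*_{\MEB(T)}$ and radius $c$ times the radius of $\MEB(T)$, so that $c$-$\safeMEB_t=\bigcap_T B_c(T)$. I read the radius $c\cdot r^*$ in the definition of $c$-\safeMEB as the radius of $\MEB(T)$ itself, which is what $c$-\MEB validity demands when $T$ is the honest set. The claim is that any algorithm achieving $c$-\MEB validity must output a point in every $B_c(T)$.

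\emph{Key steps.} First, fix the input multiset of $n$ vectors. Consider the execution in which $t$ nodes are Byzantine but behave exactly like honest nodes with some inputs. This execution is indistinguishable from a fault-free one, so the output $y$ depends only on the received vectors, not on which nodes are faulty.

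Second, suppose $y\notin B_c(T)$ for some $T$ with $|T|=n-t$. Declare the $t$ nodes outside $T$ to be the Byzantine ones, all following the protocol. Then the honest set is exactly $T$, and the honest minimum enclosing ball is $\MEB(T)$ with center $C^*_{\MEB(T)}$ and radius $r_T$. By indistinguishability the algorithm still outputs $y$. But $\|y-C^*_{\MEB(T)}\|_2>c\,r_T$, which violates $c$-\MEB validity. Since $T$ was arbitrary, $y$ must lie in $\bigcap_T B_c(T)$.

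Third, I will note where the role swap is used. The only ingredient is that, for each subset of size $n-t$, some admissible execution has exactly that subset as honest set. This holds because $n-t\ge n-t$ honest nodes is allowed. So the argument never uses where the balls lie, only the freedom to pick the honest set. That is exactly the remark preceding the corollary.

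\emph{Main obstacle.} The only subtle point is the meaning of the radius in $c$-\safeMEB. If $r^*$ were read as one fixed global radius rather than the per-subset radius $r_T$, the containment could fail in one direction. I would therefore make the per-subset reading explicit. Under that reading, the case analysis from the theorem (all of $T$ honest, or some of $T$ faulty and then swapped) transfers verbatim. The hypothesis $c\ge1$ is not needed for necessity, but it guarantees $B_c(T)\supseteq\MEB(T)$, so the requirement is consistent with the exact case $c=1$.
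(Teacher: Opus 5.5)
Your proposal is correct and follows the paper's approach. The paper proves the corollary by observing that the indistinguishability and role-swap argument of Theorem~\ref{thm:safeAgreementNecessary} depends only on which subset of size $n-t$ is declared honest, not on where the balls lie, so it carries over unchanged to the $c$-inflated balls; your version does the same, directly declaring the $t$ nodes outside $T$ Byzantine (which merges the paper's two cases into one step) and stating explicitly the per-subset radius $r_T$ that the paper's definition leaves ambiguous.
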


We next focus on the resilience that can be achieved under \MEB validity.
Observe that the lower bound of $n>(d+1)t$ for convex validity does not directly translate to the lower bound on \MEB validity, as all minimum enclosing balls of the convex hulls in~\cite{VectorConsensus} do have a common intersection. In the following, we present an almost tight resilience lower bound for \MEB validity, showing that at least $n>(d+1)t-d$ nodes are needed to satisfy \MEB validity.

\begin{theorem}\label{thm:BallValidityLB}
    No agreement algorithm can satisfy \MEB validity with $n\le(d+1)t-d$ nodes, where $t\ge d-1$.
\end{theorem}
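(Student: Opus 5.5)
By Theorem~\ref{thm:safeAgreementNecessary}, any algorithm satisfying \MEB validity must output a point of $\safeMEB_t$. So the plan is to exhibit, for every $d$ and every $t\ge d-1$ with $n\le (d+1)t-d$, a multiset $S$ of $n$ input vectors in $\mathbb{R}^d$ for which
\[
\bigcap_{T\subseteq S,\ |T|=n-t}\MEB(T)=\emptyset .
\]
The adversary then runs the indistinguishability argument from the proof of Theorem~\ref{thm:safeAgreementNecessary}. Whatever vector is output lies outside some $\MEB(T)$, and by relabelling we can make $T$ exactly the honest set. Adding nodes only makes the problem easier, so it suffices to treat the extremal case $n=(d+1)t-d=(d+1)(t-1)+1$.

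\textbf{Construction.} Split the $n$ vectors into $d+1$ clusters of $t-1$ copies each, placed at points $p_0,\dots,p_d$, together with one extra vector $q$. A subset $T$ of size $n-t$ is obtained by deleting $t$ vectors. This budget allows two kinds of deletion: an entire cluster $p_i$ plus one more vector, which can be $q$ itself; or $q$ together with partial deletions elsewhere. The role of $t\ge d-1$ is to guarantee that the deletions needed below fit in the budget and that each cluster is nonempty.

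It is not enough to delete single clusters. A short Thales-type check shows that for a triangle the three diametral disks of the sides always share a point, for instance the foot of an altitude. So the "facet balls" $\MEB(\{p_j\}_{j\neq i})$ alone will not work. The extra point $q$ and the asymmetry of radii must be exploited, as the paper hints.

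\textbf{Choice of points.} I would place $p_1,\dots,p_d$ as a small regular $(d-1)$-simplex near the origin and place $p_0$ and $q$ far away, on opposite sides along the normal direction $e_d$. I would then select $d+1$ subsets $T_0,\dots,T_d$:
\begin{itemize}
\item $T_0$ deletes the cluster at $p_0$ and $q$. Its ball is the small \MEB of $p_1,\dots,p_d$.
\item For $i\ge 1$, $T_i$ deletes the cluster at $p_i$ and one more vector, so that its set contains both $p_0$ and $q$. These balls are large, with diameter roughly the segment $p_0q$.
\end{itemize}
I would then tune the distances so that the large balls, each missing one small-simplex vertex, are tilted away from the small ball $\MEB(T_0)$. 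This tilt should be achievable by displacing $q$ slightly off-axis. The claim to verify is that some $d+1$ of the balls have empty common intersection. I would verify emptiness by a separating argument: write down the centre of each ball explicitly, since each \MEB is determined by its support points, and produce a direction $u$ along which $\max_{x\in \MEB(T_0)}u\cdot x<\min$ over the intersection of the others.

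\textbf{Main obstacle.} The hard step is the exact geometric tuning in general dimension. One must ensure that (a) the support sets of the \MEBs are the ones intended, and (b) the intersection is genuinely empty rather than a single tangency point. Both require careful inequalities between radii.

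My first attempt would be the case $d=2$, $t=1$ shifted appropriately: that is, $n=3t-2$ with clusters of size $t-1$ at three points plus one point $q$. I would compute the disks explicitly there and then lift to $\mathbb{R}^d$ by adding the regular simplex in the orthogonal complement. If this lifting fails to keep the intersection empty, the fallback is to construct the configuration inductively on $d$, adding one new cluster far along a fresh coordinate axis at each step.
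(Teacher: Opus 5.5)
The reduction you start from matches the paper's: exhibit an input whose $\safeMEB_t$ is empty, then invoke Theorem~\ref{thm:safeAgreementNecessary}. But the construction is the whole content of this theorem, your proposal does not give one, and the template you describe cannot be completed.

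First, the concrete geometry fails. Place $p_1,\dots,p_d$ as a small simplex and $p_0,q$ far away on opposite sides. Then every $p_j$ sees $[p_0,q]$ at an angle close to $180^\circ$. So each $\MEB(T_i)$, $i\ge 1$, is simply the ball with diameter $[p_0,q]$, and this ball contains $\MEB(T_0)$. Displacing $q$ slightly does not change this.

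More fundamentally, your subsets have no \emph{private} vectors: every point of each $T_k$ also lies in another chosen $T_i$. Indeed $p_j\in T_0\cap T_i$ for $i\neq j$, and $p_0,q$ lie in every $T_i$ with $i\ge1$. Already for $d=2$ this forces a common point. Take the discs $B_0=\MEB(\{p_1,p_2\})$, $B_1=\MEB(\{p_0,q,p_2\})$, $B_2=\MEB(\{p_0,q,p_1\})$ with centres $c_0,c_1,c_2$. They meet pairwise, since $p_2\in B_0\cap B_1$, $p_1\in B_0\cap B_2$ and $p_0\in B_1\cap B_2$. Suppose $B_0\cap B_1\cap B_2=\emptyset$.
\begin{itemize}
\item No disc contains another; otherwise a pairwise intersection would lie inside the third disc.
\item Hence, for each $k$ with $\{i,j\}$ the other two indices, the sets $\partial B_k\cap B_i$ and $\partial B_k\cap B_j$ are nonempty arcs, centred on the directions from $c_k$ to $c_i$ and to $c_j$.
\item These two arcs are disjoint, because a common point would lie in all three discs. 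So together they fit in an arc of length less than $2\gamma_k$, where $\gamma_k$ is the angle at $c_k$ in the triangle of centres.
\item Every element of $T_k$ lies in $B_i\cup B_j$, so the support points of $B_k=\MEB(T_k)$ lie on these arcs.
\item The centre of an \MEB lies in the convex hull of its support points, so these points cannot all lie in an open half-circle. This forces $\gamma_k>90^\circ$.
\end{itemize}
This holds for all three $k$, which is impossible since the angles sum to $180^\circ$. So these three discs always intersect, however you tune distances, and the planar base case you intended to lift fails.

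What you are missing is the paper's mechanism. It takes $d$ congruent unit balls centred at $\sqrt{d/(d-1)}\,u_i$, whose common intersection is the single point $\frac{1}{\sqrt{d(d-1)}}(1,\dots,1)$. It adds a very large $(d+1)$-st ball centred far out on the diagonal. This ball passes through the points where each $d-1$ of the unit balls meet, approximating the hyperplane through them, but misses the unique common point. Input vectors are stacked at these intersection points. Crucially, each ball also gets a private anchor vector lying in no other ball. For the $i$-th unit ball this is the point antipodal to the common point; for the big ball it is a far point on the diagonal. These anchors are what make each ball the \MEB of its subset, and the obstruction above shows they cannot be avoided.

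Private vectors lie outside $d$ of the $d+1$ balls and so consume deletion budget. You must therefore check explicitly that each chosen subset consists of exactly $n-t$ vectors lying in its ball. This bookkeeping is delicate, and it is what determines which $n$ the construction actually reaches.

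Finally, your reduction to the extremal $n$ via ``adding nodes only makes the problem easier'' is not automatic here, because $A\subseteq B$ does not imply $\MEB(A)\subseteq\MEB(B)$. For smaller $n$, argue directly instead, for example by deleting duplicate copies while keeping one copy of every support point.
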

\begin{proof}
    Assume by means of contradiction that for $n=(d+1)t-d$, the (minimum enclosing) balls of $dt-d$ nodes must all intersect in at least one point. We will show that it is possible to construct a counterexample where a subset of $d+1$ balls does not have a common intersection. 

    We start the construction with $d$ balls of equal size, where ball $i\in [d]$ has its center at the coordinate $\sqrt{\frac{d}{d-1}}\cdot u_i$, where $u_i$ is the $i$-th unit vector. We set the radius of each ball to $1$. These balls are chosen such that they intersect in exactly one point. It can be easily verified that this point has the coordinates $\frac{1}{\sqrt{d(d-1)}}\cdot(1,\ldots,1)$, and that it is the only intersection point, as the intersection point of all balls must be of the form $x\cdot(1,\ldots,1)$. Any subset of $d-1$ of these balls has an intersection point in the coordinate $\frac{\sqrt{d}+1}{(d-1)^\frac{3}{2}}\cdot \bar{u}_i$, where $\bar{u}_i$ is constructed from $u_i$ by flipping all bits. %\todo{need to rename coordinates. Here only one coordinate is $0$}

    Consider now the hyperplane defined by the $d$ points $\frac{\sqrt{d}+1}{(d-1)^\frac{3}{2}}\cdot \bar{u}_i$. This hyperplane lies at a distance of $\frac{1}{d(d-1)}>\frac{1}{d}$ from the intersection of all $d$ balls in $\frac{1}{\sqrt{d(d-1)}}\cdot(1,\ldots,1)$. We can approximate this hyperplane using the $(d+1)$-st ball from the construction. Let the midpoint of this ball lie in coordinate $y\cdot(1,\ldots,1)$, where $y$ is an arbitrarily large constant. We can choose a radius for this ball such that the ball touches the intersection of every subset of $d-1$ balls, similar to the hyperplane. 

    We now explain how the input vectors are distributed such that the defined $d+1$ balls correspond to minimum enclosing balls in the construction. We place $t-1$ input vectors in $\frac{1}{\sqrt{d(d-1)}}\cdot(1,\ldots,1)$ and in each of the coordinates $\frac{\sqrt{d}+1}{(d-1)^\frac{3}{2}}\cdot \bar{u}_i$. These are $(d+1)(t-2)= (d+1)t - 2d - 2$ input vectors. For each of the balls of radius $1$, we place one input vector in the coordinates $2\sqrt{\frac{d}{d-1}}u_i + \frac{1}{\sqrt{d(d-1)}}\bar{u}_i$, which lie on the opposite side of their common intersection of these balls. This way, the common intersection and this new input vector define the minimum enclosing ball. Additionally, we place an input vector in  $z\cdot(1,\ldots,1)$ where $y$ is chosen such that the midpoint of the largest ball of the construction is in $y\cdot(1,\ldots,1)$. The distance between the ball and the hyperplane above should be smaller than $\frac{1}{d}$. This part of the construction requires $d+1$ additional input vectors, resulting in $(d+1)t -d-1$ input vectors in total.

    Assume a setting with $(d+1)t -d-1$ vectors placed as above in $\mathbb{R}^d$. Observe that any two minimum enclosing balls must have at least $n-2t$ input vectors in their intersection. Further, by construction, any subset of $d$ balls has a common intersection, where $t-2$ input vectors are placed. Therefore, any two balls from the intersection have $d(t-2)= dt-2d$ input vectors in common. We need to make sure that $n-2t=(d-1)t-d-1 \ge dt-2d$. This inequality holds for $t\ge d-1$.

    The presented construction shows that the intersection of all minimum enclosing balls may be empty. According to Theorem~\ref{thm:safeAgreementNecessary}, this means that the \MEB validity condition cannot be satisfied.

\begin{figure}
\centering
\begin{subfigure}{.5\textwidth}
  \centering
  \includegraphics[width=0.65\linewidth]{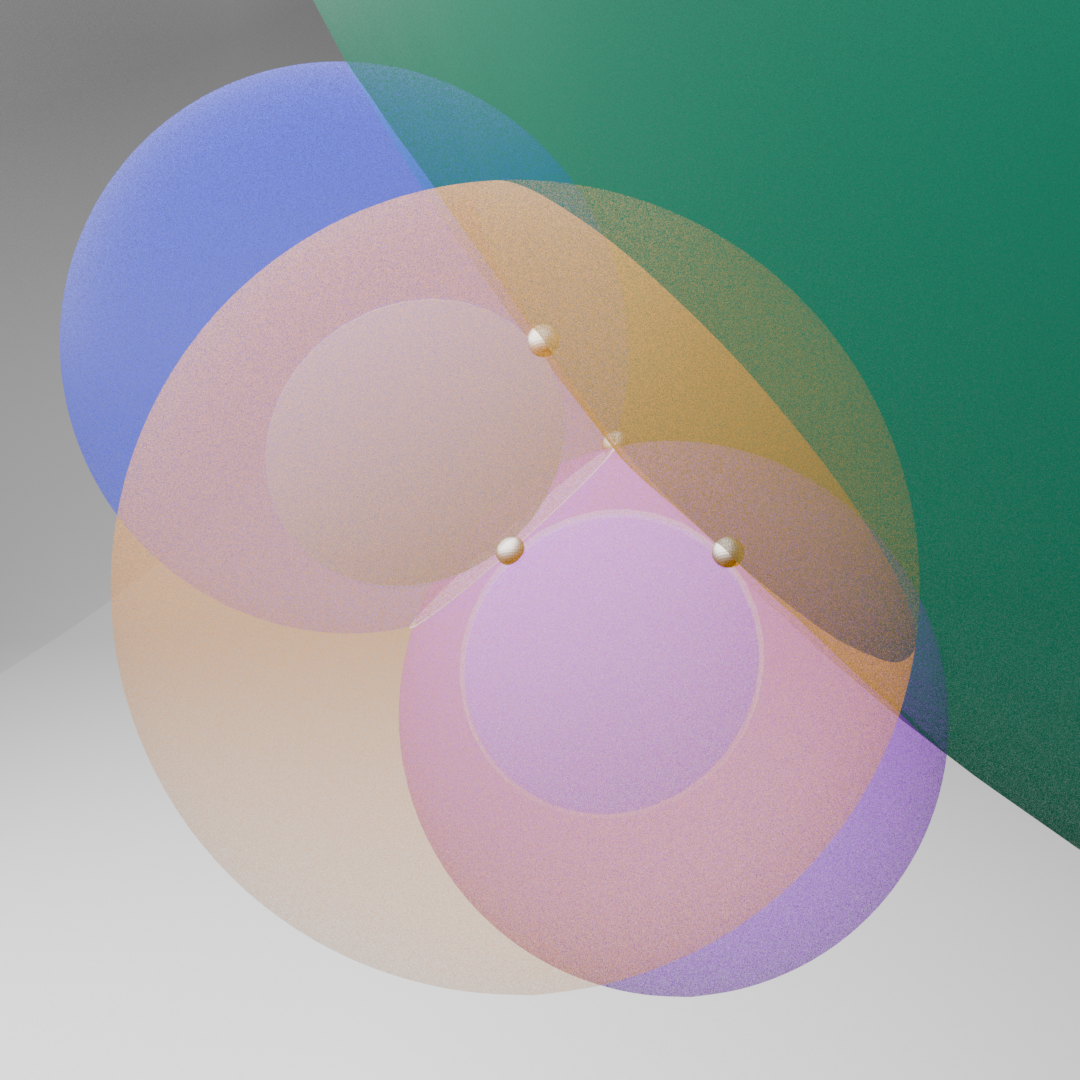}
  \caption{View from the side}
  \label{subfig:sideView}
\end{subfigure}%
\begin{subfigure}{.5\textwidth}
  \centering
  \includegraphics[width=0.65\linewidth]{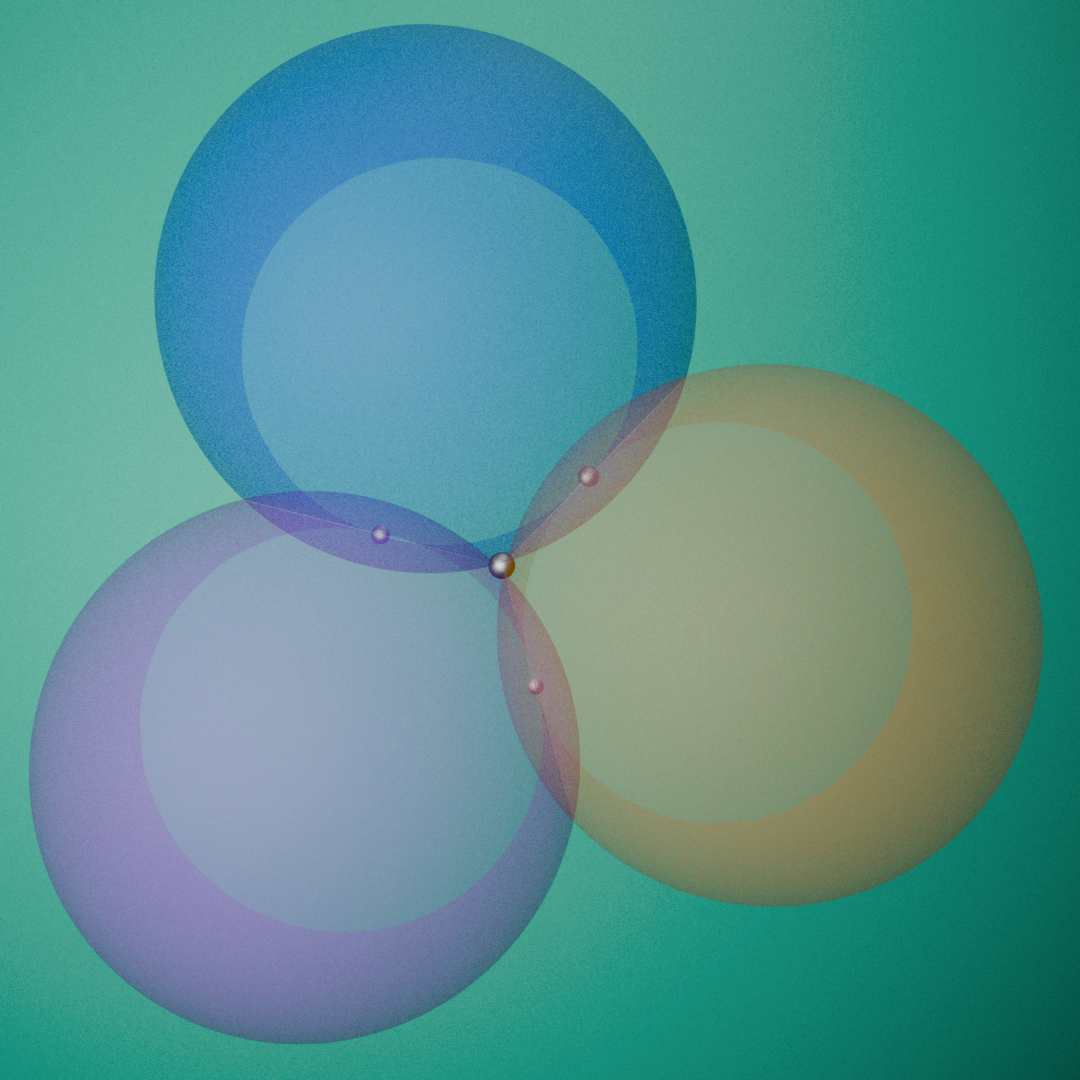}
  \caption{View from the origin}
  \label{subfig:bottomView}
\end{subfigure}
\caption{An example of the lower bound construction for $d=3$. The points in the figures represent the intersection points for each subset of three balls. Observe that there is a visible hole between the balls showing that there is no intersection between all four balls. Note that the input vectors for this construction are chosen as described in the proof of Theorem~\ref{thm:BallValidityLB}.}
\label{fig:LBconstruction}
\end{figure}
Figure~\ref{fig:LBconstruction} presents this construction in three dimensions. 
\end{proof}

\section{Optimal algorithm for \MEB validity}\label{sec:optimal}

In this section, we present positive results, by showing that agreement inside \safeMEB is sufficient to satisfy \MEB validity, and by presenting an optimal algorithm for $c$-\MEB validity, under the assumption $n>2t$.

\begin{lemma}
    $n>(d+1)t$ nodes are sufficient to satisfy \MEB validity.
\end{lemma}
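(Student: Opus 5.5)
The plan is to reduce the statement to the known sufficiency result for convex validity. The key containment is that for every set $T$ the convex hull satisfies $\convexhull(T)\subseteq\MEB(T)$, since the minimum enclosing ball is convex and contains all points of $T$. So it suffices to exhibit an aggregation rule whose output lies in $\convexhull(T)$ for every subset $T$ of $n-t$ received vectors; such an output automatically lies in $\safeMEB_t$ as well.

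The first step is to invoke Helly's theorem in the form used by~\cite{VectorConsensus,mendes2015multidimensional}. Consider the family $\{\convexhull(T) : T\subseteq S,\ |T|=n-t\}$ of convex sets in $\mathbb{R}^d$. Any $d+1$ of them, $T_1,\dots,T_{d+1}$, have a common point. Each $T_i$ omits only $t$ of the $n$ received vectors, so together they omit at most $(d+1)t<n$ vectors. Hence some received vector $v$ lies in every $T_i$, and therefore in every $\convexhull(T_i)$. By Helly's theorem the whole family has a nonempty intersection, the safe area for convex validity, and this area is contained in $\bigcap_T \MEB(T)=\safeMEB_t$.

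Next I would define the rule: the server (or each non-faulty node, running the same deterministic procedure on the same collected vectors) outputs a fixed point of this intersection, for example one chosen by a deterministic linear program. Because at most $t$ nodes are faulty, the honest set $H$ contains some subset $T^\ast$ of $n-t$ received vectors that are all honest. The output lies in $\convexhull(T^\ast)\subseteq\convexhull(H)\subseteq\MEB(H)$. The last inclusion holds because $\MEB(H)$ is a convex set containing $H$. This gives \MEB validity.

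The only delicate point I expect is that $\MEB(T^\ast)$ need not be contained in $\MEB(H)$. For that reason I would route the argument through the convex hulls, using $\convexhull(T^\ast)\subseteq\convexhull(H)$, rather than arguing directly from $\safeMEB_t$. If $|H|>n-t$, the same reasoning applies unchanged.
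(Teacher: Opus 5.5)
Your proof is correct and follows the paper's route: Helly's theorem gives a common point of the convex hulls of all $(n-t)$-subsets, and containment of the convex hull in the minimum enclosing ball transfers this to \MEB validity. You write out the $(d+1)t<n$ counting step that the paper only cites from~\cite{mendes2015multidimensional}, and you conclude via $y\in\mathrm{Conv}(T^\ast)\subseteq\mathrm{Conv}(H)\subseteq\mathrm{MEB}(H)$ instead of via the balls' common intersection, which explicitly covers the case $|H|>n-t$ where $\mathrm{MEB}(H)$ is not one of the intersected balls.
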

\begin{proof}
    This follows trivially from the upper bound on convex validity~\cite{mendes2015multidimensional} which was proven using Helly's theorem~\cite{danzer1963helly}. In this bound, it was shown that, for $n>(d+1)t$, the convex hulls of all subsets of $n-t$ nodes have a non-empty intersection. Note that if the convex hulls of any subset of $n-t$ nodes intersect, so do the respective minimum enclosing balls.
\end{proof}

In the case when $n\leq(d+1)t$, \safeMEB does not necessarily exist, as the balls might have an empty intersection. Hence, exact \MEB validity is not satisfiable. Since it is not possible to know which \MEB consists of only honest nodes, we therefore consider a problem that minimizes the mistake of choosing the wrong ball. In particular, for a subset $T \subseteq P$ with $|T| = n-t$, let $\MEB(T)$ have center $c_T$ and radius $r_T$.
For a candidate output $y \in \mathbb{R}^d$, define $\phi_T(y)$ as:
\begin{align*}
    \phi_T(y) &= \frac{dist_2(y-\MEB(T))}{r_T}
    = \frac{\max\{0,\|y-c_T\|_2 - r_T\}}{r_T}
\end{align*}
Using this definition, we provide an optimal algorithm for $c$-\MEB validity shown in Algorithm \ref{alg:minmax-meb}.

\begin{algorithm}[b]
\caption{MinMax-MEB}
\label{alg:minmax-meb}
\begin{algorithmic}[1]
\State for every subset $T \subseteq P$ with $|T| = n-t$, compute $MEB(T)$ with center $c_T$ and radius $r_T$
\If{there exists $T$ with $r_T=0$}
\State \Return $c_T$
\Else 
\State choose $y^*\in \arg \min_{y\in \mathbb{R}^d} \max_{\substack{T \subseteq P \\ |T| = n-t}} \phi_T(y)$
\EndIf
\State \Return $y^*$
\end{algorithmic}
\end{algorithm}

\begin{theorem}
    Algorithm~\ref{alg:minmax-meb} solves the $c$-\MEB-validity optimally, if $n>2t$.
\end{theorem}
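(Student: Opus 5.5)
The plan is to read ``optimally'' as follows: on every input configuration $P$, the output $y^*$ of Algorithm~\ref{alg:minmax-meb} satisfies $c$-\MEB validity with the smallest relaxation factor that any deterministic aggregation rule can guarantee on $P$. Throughout I take the honest reference ball to be $\MEB(T)$ for the relevant $(n-t)$-subset $T$, so that the $c$-$\safeMEB_t$ region is $\bigcap_T B(c_T, c\, r_T)$. I set $\phi^*(P)=\min_y \max_T \phi_T(y)$ and aim to show that the optimal factor is exactly $c^*(P)=1+\phi^*(P)$. The proof has three parts: well-definedness, an upper bound, and a matching lower bound.

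\textbf{Well-definedness.} In the branch where every $r_T>0$, each $\phi_T$ is continuous and convex, because $y\mapsto\|y-c_T\|_2$ is convex. There are finitely many $T$, so $\Phi(y)=\max_T\phi_T(y)$ is continuous, convex and coercive, since $\Phi(y)\to\infty$ as $\|y\|\to\infty$. Hence the $\arg\min$ is nonempty and $y^*$ exists. I will also note that $\Phi(y)\le \phi$ holds if and only if $\|y-c_T\|_2\le(1+\phi)r_T$ for every $T$. Therefore $\phi^*$ is the least $\phi$ for which $\bigcap_T B(c_T,(1+\phi)r_T)\neq\emptyset$, i.e.\ the least $c-1$ for which the $c$-$\safeMEB_t$ region is nonempty.

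\textbf{Upper bound.} Whatever the identity of the Byzantine nodes, I use the worst case where the honest set is some $T$ with $|T|=n-t$. The case $|H|>n-t$ only adds honest points; this can be reduced to the same argument by the indistinguishability of executions, as in Theorem~\ref{thm:safeAgreementNecessary}, where I would treat the extra honest nodes as faulty nodes following the protocol. For that $T$, $\|y^*-C^*_{\MEB}\|_2=\|y^*-c_T\|_2\le(1+\phi^*)r_T$, so $(1+\phi^*)$-\MEB validity holds. In the degenerate branch where some $r_T=0$, the $n-t$ points of $T$ coincide at $c_T$, and returning $c_T$ costs $0$ relative to that ball. For every other $T'$, the condition $n>2t$ gives $|T\cap T'|\ge n-2t\ge 1$, so $c_T\in T'\subseteq \MEB(T')$, and the factor $1$ suffices.

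\textbf{Lower bound.} This follows the Corollary to Theorem~\ref{thm:safeAgreementNecessary}. Suppose an algorithm guarantees $c'$-\MEB validity on $P$. Every $T$ could be the honest set in an execution indistinguishable from the actual one: the $t$ nodes outside $T$ are faulty but follow the protocol. The output $y$ is therefore identical across these executions, so $y\in\bigcap_T B(c_T,c'r_T)$. Consequently $\Phi(y)\le c'-1$, which gives $c'\ge 1+\phi^*$. In the degenerate branch, if $r_T=0$ the only admissible output for honest set $T$ is $c_T$ itself, so no algorithm can do differently.

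\textbf{Main obstacle.} I expect the delicate step to be justifying the interchange in this indistinguishability argument when $|H|>n-t$. One must ensure the reference ball is the \MEB of the full honest set rather than of a subset, and an output close to every $(n-t)$-subset's ball need not, a priori, be close to $\MEB(H)$ with the same factor. I would first attempt to argue that the adversary can always realize the execution with exactly $n-t$ honest nodes, making that the worst case. If this fails, I would weaken the statement to optimality over the configurations with $|H|=n-t$. The assumption $n>2t$ enters both in the degenerate case and in making every pair of subsets share an input.
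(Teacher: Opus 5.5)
Your proposal follows the same route as the paper's proof. The degenerate branch uses $|T_0\cap T|\ge n-2t\ge 1$, and the non-degenerate branch uses the observation that every $(n-t)$-subset could be the honest one; you only spell out what the paper leaves implicit, namely existence of the minimizer and the matching lower bound from the corollary to Theorem~\ref{thm:safeAgreementNecessary}. The $|H|>n-t$ caveat you flag is shared with the paper, whose proof silently identifies the honest ball with some candidate $\MEB(T)$, $|T|=n-t$. Your relabeling sentence does not by itself give the upper bound, because it changes the reference ball. The issue vanishes when $n-t\ge d+1$, since then some $T\subseteq H$ contains a support set of $\MEB(H)$ and so $\MEB(T)=\MEB(H)$; otherwise your fallback restriction to $|H|=n-t$ is exactly what the paper's argument covers.
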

\begin{proof}
    %This somehow comes from the statement that any of the balls can be the true one (see corollary), and then the optimization problem is exactly the way to minimize the mistake. 
    Let $\mathcal{T}=\{T\subseteq P \mid |T|=n-t\}$. For every $T\in\mathcal{T}$, let $\MEB(T)$ have center $c_T$ and radius $r_T$. 
    We first consider the case where there exists $T_0\in\mathcal{T}$ with $r_{T_0}=0$ and center $c_{T_0}$. Since $n>2t$, every other subset $T\in\mathcal{T}$ intersects $T_0$. Hence every such $T$ contains the point $c_{T_0}$, and therefore $c_{T_0}\in\MEB(T)$ for all $T\in\mathcal{T}$. Thus $c_{T_0}$ lies in the exact safe \MEB area, and returning $c_{T_0}$ satisfies exact \MEB-validity, i.e., $c=1$.

    It remains to consider the case, in which all MEBs have a non-zero radius. Since it is not known which of the \MEB is defined on the subset containing honest nodes only, each subset might be the correct one. Algorithm~\ref{alg:minmax-meb} therefore computes the point, which minimizes the distance to the \MEB w.r.t. the ball radius, computed on all possible subsets of nodes of size $n-t$. This implies that for each possible \MEB, the Algorithm~\ref{alg:minmax-meb} satisfies $c$-\MEB validity. 
\end{proof}

Next, we present an upper bound on the relaxation $c$ used by the optimal algorithm to solve $c$-\MEB validity. This upper bound is based on the Descartes' theorem on the kissing circles. We use the statement of the theorem based on bends that originates in Soddy's poem ``The Kiss Precise''~\cite{soddy1936kiss}, or rather on the fourth verse of the poem (its generalization to $n$ dimensions), presented by Gosset~\cite{soddy1937kiss}. 

\begin{theorem}[Soddy-Gosset Theorem~\cite{soddy1937kiss}]\label{thm:soddy-gosset}
    Given $d+2$ mutually tangent spheres in $\mathbb{R}^d$. Let $r_i, i\in [d+2]$ denote the radii of the spheres and $b_i = \frac{1}{r_i}, i\in [d+2]$ their bends. Then,
    $$\left(\sum\limits_{i=1}^{d+2} b_i\right)^2 = d\cdot \sum\limits_{i=1}^{d+2} b_i^2.$$
\end{theorem}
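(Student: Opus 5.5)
Since this is a classical result (Soddy for $d=2$, Gosset for general $d$), I will give a self-contained proof via the Gram-matrix (Lorentzian) method rather than re-derive it geometrically. First I fix a convention: spheres that enclose the others get a negative bend, and hyperplanes get bend $0$. To each sphere $S_i$ with center $x_i\in\mathbb{R}^d$ and bend $b_i$, I attach the vector
$$w_i=\left(b_i x_i,\ \tfrac{b_i\|x_i\|^2-r_i}{2}\cdot\text{(suitable normalization)},\ b_i\right)\in\mathbb{R}^{d+2},$$
more precisely the standard ``augmented curvature'' coordinates $(\bar b_i, b_i, b_i x_i)$, where $\bar b_i=b_i\|x_i\|^2-r_i$ is the bend of the sphere's image under inversion in the unit sphere.

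I then equip $\mathbb{R}^{d+2}$ with the Lorentzian quadratic form $Q(\bar b,b,z)=\|z\|^2-\bar b\, b$, with associated bilinear form $\langle\cdot,\cdot\rangle$. Two direct computations follow.

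\begin{itemize}
\item $\langle w_i,w_i\rangle=1$. This is the identity $b_i^2\|x_i\|^2-(b_i\|x_i\|^2-r_i)b_i=r_ib_i=1$.
\item For distinct $i,j$, $\langle w_i,w_j\rangle=\frac{b_ib_j}{2}\left(r_i^2+r_j^2-\|x_i-x_j\|^2\right)$, which equals $-1$ exactly when the spheres are externally tangent, i.e.\ $\|x_i-x_j\|=r_i+r_j$. Internal tangency gives the same value once the signed-bend convention is used.
\end{itemize}

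Hence the Gram matrix of $w_1,\dots,w_{d+2}$ with respect to $Q$ is $G=2I-J$, where $J$ is the all-ones matrix.

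Let $W$ be the $(d+2)\times(d+2)$ matrix with rows $w_i$, and let $M$ be the matrix of $Q$. Then $WMW^\top=G$. The matrix $G$ is invertible: its eigenvalues are $2$ and $2-(d+2)=-d$, and $G^{-1}=\frac12\left(I-\frac{2}{d}J\cdot\frac12\right)$, which after simplification is $G^{-1}=\frac12 I-\frac{1}{2d}J$. It follows that $W$ is invertible and $M^{-1}=W^\top G^{-1}W$.

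Now take the $(b,b)$ entry of this matrix identity. The relevant entry of $M^{-1}$ vanishes, because the $(b,b)$ entry of $M^{-1}$ for $Q=\|z\|^2-\bar b b$ is $0$. On the right-hand side the entry equals
$$\beta^\top G^{-1}\beta=\tfrac12\sum_i b_i^2-\tfrac{1}{2d}\Big(\sum_i b_i\Big)^2,$$
where $\beta=(b_1,\dots,b_{d+2})$ is the $b$-column of $W$. Setting this to zero gives exactly $\left(\sum b_i\right)^2=d\sum b_i^2$.

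The main obstacle is to get the normalization of $\bar b$ and the sign conventions consistent, so that the Gram matrix is exactly $2I-J$ in both the external and the internal tangency cases. Degenerate cases, namely a hyperplane with $b=0$, are handled either by continuity or by replacing $w_i$ with a unit normal vector. If the bookkeeping proves awkward, the fallback is to normalize by a similarity transformation first.
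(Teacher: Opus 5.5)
Your argument is correct, and it necessarily differs from the paper, which gives no proof of its own and only points to the literature. What you give is the augmented curvature-center (Lagarias--Mallows--Wilks) proof, and the details check out.

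\begin{itemize}
\item With $\bar b_i=b_i\|x_i\|^2-r_i$ and $Q(\bar b,b,z)=\|z\|^2-\bar b\,b$, one gets $\langle w_i,w_i\rangle=b_ir_i=1$.
\item For oriented tangency, $\langle w_i,w_j\rangle=\tfrac{b_ib_j}{2}(r_i^2+r_j^2-\|x_i-x_j\|^2)=-1$, so $G=2I-J$.
\item $\det(W)^2\det(M)=\det(G)\neq 0$ gives invertibility of $W$.
\item $G^{-1}=\tfrac12 I-\tfrac{1}{2d}J$ follows from $J^2=(d+2)J$.
\item The vanishing $(b,b)$ entry of $M^{-1}$ yields the identity.
\end{itemize}

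So the ``main obstacle'' you flag is already settled by your own two bullet computations. The first formula for $w_i$, with the ``(suitable normalization)'' placeholder, should simply be dropped.

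Compared with a bare citation, your route buys three things. It is self-contained. It yields the stronger matrix identity $M^{-1}=W^\top G^{-1}W$, which also constrains the centers. Most usefully here, it pins down the hypothesis the statement actually needs: every pair must be tangent with disjoint oriented interiors, and an enclosing sphere must carry a negative bend.

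The statement as printed, with unsigned $b_i=1/r_i$ and mere mutual tangency, is false in general:
\begin{itemize}
\item Three unit circles and their outer Soddy circle satisfy it only if the outer circle is assigned bend $3-2\sqrt3<0$.
\item Nested spheres internally tangent at a common point (e.g.\ radii $1,2,3,4$ in the plane) violate it for every sign choice. There your computation gives $\langle w_i,w_j\rangle=+1$ for some pair.
\end{itemize}
The paper does implicitly use signed bends when it distinguishes the inner and outer Soddy balls by the sign of the bend.

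You should state the disjoint-oriented-interiors hypothesis explicitly. Hyperplanes lie outside the statement as phrased (all radii are finite), so your continuity remark is optional.
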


Formal proofs of this theorem can be found in~\cite{SoddyGossetFirstProofTinyBitDanish, SoddyGossetOtherProofEnglish}.

\begin{theorem}\label{thm:upperBoundOpt}
    For $n>2t$, the optimal $c$ for $c$-\MEB-validity can be upper bounded by $1+\frac{k-1}{k+1+\sqrt{2(k+1)k}} < \sqrt{2}$, where $k=\min\{d, \binom{n}{n-t}-1\}$.
\end{theorem}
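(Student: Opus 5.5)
The plan is to bound the optimal value of the min-max problem in Algorithm~\ref{alg:minmax-meb} by looking at the worst configuration of candidate balls. Write $\mathcal{T}$ for the family of $(n-t)$-subsets. Since $n>2t$, any two subsets $T,T'\in\mathcal{T}$ share at least $n-2t\ge 1$ input vector. Hence every pair of balls $\MEB(T),\MEB(T')$ intersects. The optimal $c$ is the smallest factor $\lambda\ge 1$ such that the balls $B(c_T,\lambda r_T)$ have a common point. So we must show that pairwise intersecting balls, after scaling each radius by $\lambda=1+\frac{k-1}{k+1+\sqrt{2(k+1)k}}$, all meet. Here $k+1$ is the number of balls that matter.

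\textbf{Step 1: reduction to $k+1$ balls.} There are only $\binom{n}{n-t}$ distinct balls. By Helly's theorem in $\mathbb{R}^d$, a family of scaled balls has a common point if and only if every $d+1$ of them do. So it suffices to handle $k+1=\min\{d+1,\binom{n}{n-t}\}$ pairwise intersecting balls. If $k+1$ balls span a space of dimension at most $k$, we may also restrict to a $k$-dimensional affine subspace containing their centers. Any point of the common intersection can be projected onto that subspace, and projection does not increase distance to the centers.

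\textbf{Step 2: extremal configuration.} Take $k+1$ pairwise intersecting balls in $\mathbb{R}^k$ and the smallest common scaling $\lambda$, with optimal point $y^*$. At the optimum, the binding constraints are balanced in the usual min-max (KKT) way. I would argue that the worst case is when the balls are pairwise tangent, not overlapping: shrinking radii while keeping pairwise intersection only makes the problem harder. The free parameter is then the radii. At the optimum, $y^*$ is at relative distance exactly $\lambda$ from each center, i.e.\ $\|y^*-c_T\|=\lambda r_T$. I would then show that the extremal radii come from $k+1$ mutually tangent spheres together with the inner Soddy sphere, centred at $y^*$, tangent to all of them. This is the step I expect to be the main obstacle. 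Making rigorous that the worst configuration is the mutually tangent one needs a perturbation/monotonicity argument. The cleanest route is probably to argue that, for fixed $y^*$, one can decrease radii until tangency without decreasing the required $\lambda$.

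\textbf{Step 3: computation via Theorem~\ref{thm:soddy-gosset}.} With $d$ replaced by $k$, we have $k+2$ mutually tangent spheres: the $k+1$ balls plus the inner sphere of radius $\rho$ centred at $y^*$. For ball $T$, $\|y^*-c_T\|=r_T+\rho$, so the required factor is $1+\rho/r_T$. The worst case is equal radii $r$, by symmetry and a convexity argument on the bends. The Soddy--Gosset relation with $k+1$ bends equal to $b=1/r$ and unknown bend $b'$ gives
\[
\bigl((k+1)b+b'\bigr)^2 = k\bigl((k+1)b^2+b'^2\bigr).
\]
Solving this quadratic for the larger root and simplifying $\rho/r=b/b'$ gives
\[
\lambda=1+\frac{k-1}{k+1+\sqrt{2k(k+1)}}.
\]
This matches the statement, and it is increasing in $k$. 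Finally, as $k\to\infty$ the fraction tends to $1/(1+\sqrt{2})=\sqrt{2}-1$, and it stays strictly below this limit for every $k$. Hence the bound is $<\sqrt{2}$.
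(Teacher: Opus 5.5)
Your route is the paper's route: Helly reduction to $k+1$ balls, pairwise intersection from $n>2t$, passage to a mutually tangent configuration, the inner Soddy sphere, then Soddy--Gosset with equal radii. Your final computation is correct, and the value simplifies to $1+\frac{k-1}{k+1+\sqrt{2k(k+1)}}=\sqrt{2k/(k+1)}$. However, Steps 2--3 contain a genuine gap. The paper's proof makes the same leaps: it replaces intersecting balls by tangent ones and argues that equal radii are worst from the Soddy radius alone. So following the paper does not close the gap.

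\textbf{Tangency cannot be reached by shrinking radii.} With centers fixed, the minimal points of $\{r: r_i+r_j\ge\|c_i-c_j\|\}$ are generally not mutually tangent. Already for $k=2$, take centers on an equilateral triangle of side $2$ and radii $(\tfrac32,\tfrac12,\tfrac32)$. No radius can be decreased, yet balls $1$ and $3$ overlap. For $k\ge3$ there are $\binom{k+1}{2}$ tangency conditions but only $k+1$ radii.

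\textbf{The min-max point is not the Soddy center.} You identify $y^*$, which equalizes the ratios $\|y-c_T\|/r_T$, with the inner Soddy center, which equalizes the differences $\|y-c_T\|-r_T$. These coincide only when all radii are equal. If you use the Soddy center as the witness point, the factor it needs is $1+\rho/\min_T r_T$, and this is \emph{not} maximized at equal radii. For mutually tangent disks of radii $1,1,\epsilon$, Descartes gives $\rho^{-1}=2+\epsilon^{-1}+2\sqrt{1+2/\epsilon}$. Hence $1+\rho/\epsilon\to2>\sqrt2$ as $\epsilon\to0$. So ``equal radii is the worst case by symmetry and convexity of the bends'' fails for the quantity you compute. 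It would have to be proved for the true min-max value, and the Soddy picture gives no handle on that.

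\textbf{The missing idea is to argue at the optimum directly.} Suppose the optimal value is $\lambda>1$. The optimality condition for $\max_T\|y-c_T\|/r_T$, together with Carath\'eodory, gives weights $\beta_T\ge0$ with $\sum_T\beta_T=1$, supported on at most $k+1$ active balls. They satisfy $\sum_T\beta_T(c_T-y^*)=0$ and $\|c_T-y^*\|=\lambda r_T$. Now expand $0=\|\sum_T\beta_T(c_T-y^*)\|^2$. Bound each cross term using pairwise intersection: $2(c_T-y^*)\cdot(c_{T'}-y^*)\ge\lambda^2(r_T^2+r_{T'}^2)-(r_T+r_{T'})^2$. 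This yields
\[
(\lambda^2-1)\sum_T\beta_T r_T^2\le\left(\sum_T\beta_T r_T\right)^2-2\sum_T\beta_T^2 r_T^2 .
\]
Two Cauchy--Schwarz steps finish the argument: $(\sum_T\beta_T r_T)^2\le(k+1)\sum_T\beta_T^2r_T^2$ and $(\sum_T\beta_T r_T)^2\le\sum_T\beta_T r_T^2$. Together they give $\lambda^2\le 2k/(k+1)$, which is exactly the stated bound. This is Jung's theorem extended to unequal radii. It needs neither tangency nor Soddy--Gosset, and equality holds in your regular-simplex configuration.
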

\begin{proof}
We consider a subset of $k+1$ minimum enclosing balls with an empty intersection. If the intersection of any subset of $k+1$ balls were not empty, by Helly's theorem~\cite{danzer1963helly}, the intersection of all minimum enclosing balls would be non-empty too. Thus, the optimal algorithm would choose a point inside the intersection of all minimum enclosing balls. Note that if there are more than $k+1$ balls in the construction that do not intersect, we can consider the subset with the worst distance ratios to the respective balls. 
We further assume that the midpoints of the $k+1$ balls are in a general position in a $k$-dimensional subspace of $\mathbb{R}^d$. If the midpoints lied in a lower dimensional space, the optimal solution would also lie in this subspace, and thus we could consider the following construction with fewer balls in a smaller subspace. 
In the following, we will make use of the property that any pair of minimum enclosing balls must intersect. This property holds for $n>2t$. Instead of considering proper intersections, we will construct a tangent ball, which allows us to apply the Soddy-Gosset theorem.
Note that for $k\ge 3$, this assumption is too weak, as by construction, minimum enclosing balls of any subset of $d$ balls must intersect as well. Thus, the following upper bound on $c$ is tight for the case $k=2$, but the actual bound is likely smaller for larger $k$.

Consider now a set of $k+1$ tangent balls, i.e., a setting where each pair of balls intersects in exactly one point. Note that there is a $(k+2)$-nd ball that can be inserted in the hole between the $k+1$ tangent balls such that it is tangent to each of the balls, denoted the inner Soddy ball. We will now consider the relaxation factor needed to include the midpoint of the inner Soddy ball in the intersection of all scaled balls. Observe that, if all minimum enclosing balls have the same radius, midpoint of the inner Soddy ball corresponds to the optimal intersection point. 

Let $r_1,\ldots,r_{k+1}$ denote the radii of the considered minimum enclosing balls, and the $b_1,\ldots,b_{k+1}$ their bend, that is, $b_i=\frac{1}{r_i}\ \forall i \in [k+1]$. Let $r_{k+2}$ and $b_{k+2}$ denote the radius and the bend of the inner Soddy ball. By the Soddy-Gosset theorem (Theorem~\ref{thm:soddy-gosset}), the bend of the two Soddy balls can be computed as 

\begin{align}
    b_{k+1} = \frac{\sum\limits_{i=1}^{k+1}b_i \pm \sqrt{k\cdot\left(\left(\sum\limits_{i=1}^{k+1} b_i\right)^2 -(k-1)\cdot\sum\limits_{i=1}^{k+1} b_i^2\right)}}{k}.\label{eq:innerSoddyBall}
\end{align}
Note that a positive bend corresponds to the inner circle in this formula. 
Assume now that the largest minimum enclosing ball has radius $1$. Observe that the radius of the inner Soddy ball is minimized if all minimum enclosing balls have the same radius $1$. This can be verified by starting with balls of radius $1$ and decreasing the radius of one of the balls, or increasing the bend. Note that Equation~\ref{eq:innerSoddyBall} also increases monotonically in this case. For $k=2$, this also shows that the radius is maximized for the case where the midpoint of the inner Soddy ball corresponds to the optimal intersection point chosen by an optimal algorithm solving $c$-\MEB validity. 

If all radii are $1$, the formula for the bend of the inner Soddy ball reduces to $$r_{k+2} = \frac{1}{b_{k+2}} = \frac{k-1}{k+1+\sqrt{2(k+1)k}}.$$

The relaxation factor $c$ can be computed as $1+\frac{k-1}{k+1+\sqrt{2(k+1)k}}$ which can be upper bounded by $\sqrt{2}$ for $k\rightarrow\infty$.
\end{proof}

Once the candidate MEBs are fixed, Algorithm~\ref{alg:minmax-meb} can be viewed as a second-order cone program, implying that the optimization problem is convex. However, the computational bottleneck is finding all candidate balls. Next, we study standard aggregation rules with explicit relaxed-\MEB guarantees.

%\todo[inline]{If we assume that there can be intersections, then the worst case should be when all balls have the same size, and any $d$ intersect in exactly one point (which they have to). For this case, we should be able to compute the upper bound on $c$, as it is a special case (example as above, but the balls are shifted toward the midpoint of the inner Soddy's circle until any three intersect). It would be better to have a general formula, but we somehow need to start with a configuration where all balls just touch, and then increase the radii until they intersect...}
%\todo[inline]{We somehow need to mention that the problem is either easy or hard to solve. I assume it is hard since we sometimes need to compute the geometric median. But I am not perfectly sure}

% \begin{lemma}
%     there is a lower bound showing that the above approximation $x$ is tight, hopefully already in $2$ dimensions, otherwise we are a bit screwed...
% \end{lemma}

\section{Existing algorithms satisfying $c$-\MEB-validity}\label{sec:existing_algorithms}
In the following, we examine the use of known algorithms, such as minimum-diameter averaging (MDA), geometric median and medoid, for \MEB validity. Our objective is to provide guarantees on the output of these algorithms w.r.t. the minimum enclosing ball of the honest input. For each algorithm, we provide a constant $c$, for which the algorithm satisfies $c$-\MEB validity. First, we define MDA, geometric median and medoid. 

\begin{definition}[MDA]
    Let $P=\{v_1,\dots,v_n\}\subset \mathbb{R}^d$ and let $t$ be the maximal number of Byzantine nodes. Define the diameter of a set $M\subseteq P$ as $D_M = \max_{v_i,v_j\in P} \|v_i-v_j\|_2$.
An MDA output is obtained as follows: choose a subset
$M \in \arg\min_{\substack{M\subseteq P \\ |M|=n-t}} D_M$, and output $ MDA = \frac{1}{|M|} \sum_{v_i\in M} v_i$.
\end{definition}

\begin{definition}[Geometric Median]
    Consider a set of $n$ nodes $\{v_{1},v_{2},\dots ,v_{n}\}$ with each $v_{i}\in \mathbb {R} ^{d}$, the geometric median of this set, denoted $\mu$, is defined as
    ${\underset {\mu\in \mathbb {R} ^{d}}{\operatorname {arg\,min} }}\sum _{i=1}^{n}\left\|v_{i}-\mu \right\|_{2}.$
\end{definition}

\begin{definition}[Medoid]
    Let $P=\{v_1,\dots,v_n\}\subset \mathbb{R}^d$. A medoid of $P$ is any input point
$y \in \arg\min_{v_i\in P} \sum_{j=1}^n \|v_i - v_j\|_2$.
\end{definition}

We begin with MDA, which selects a minimum-diameter subset and averages its elements and show that it satisfies $\left(1+\frac{2t}{n-t}\right)$-\MEB validity. 

\begin{theorem}
    MDA satisfies $\left(1+\frac{2t}{n-t}\right)$-\MEB validity. 
\end{theorem}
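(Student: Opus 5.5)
The plan is to compare the subset $M$ selected by MDA with an all-honest subset of the same size, and then bound the distance of the average of $M$ from $C^*_{\MEB}$ point by point. Throughout, I read the diameter $D_M$ in the definition of MDA as the maximum of $\|v_i-v_j\|_2$ over $v_i,v_j\in M$ rather than over $P$, since that is clearly what is intended.

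\textbf{Step 1 (diameter of the chosen subset).} Since $|H|\ge n-t$, I fix any $H'\subseteq H$ with $|H'|=n-t$. Every vector of $H'$ lies in the honest \MEB, which has center $C^*_{\MEB}$ and radius $r^*$. Hence any two of them are at distance at most $2r^*$, so $D_{H'}\le 2r^*$. Because $H'$ is a feasible candidate in the minimization defining MDA, the selected set satisfies $D_M\le D_{H'}\le 2r^*$.

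\textbf{Step 2 (each point of $M$ is close to the center).} Let $b$ be the number of Byzantine vectors in $M$. Then $b\le t$, and $M$ contains $n-t-b\ge n-2t\ge 1$ honest vectors; this is where the assumption $n>2t$ from Section~\ref{sec:model} is used. Each honest $h\in M$ satisfies $\|h-C^*_{\MEB}\|_2\le r^*$. For a Byzantine vector $v\in M$, I fix an honest $h_0\in M$. By Step 1, $\|v-h_0\|_2\le D_M\le 2r^*$, so the triangle inequality gives $\|v-C^*_{\MEB}\|_2\le 3r^*$.

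\textbf{Step 3 (averaging).} Let $y=\frac{1}{n-t}\sum_{v\in M}v$ be the MDA output. By convexity of the norm (the triangle inequality applied to the average),
$$\|y-C^*_{\MEB}\|_2\le \frac{1}{n-t}\sum_{v\in M}\|v-C^*_{\MEB}\|_2\le \frac{(n-t-b)\,r^*+3b\,r^*}{n-t}=\Bigl(1+\frac{2b}{n-t}\Bigr)r^*.$$
Using $b\le t$, this is at most $\bigl(1+\frac{2t}{n-t}\bigr)r^*$, which is exactly $c$-\MEB validity with $c=1+\frac{2t}{n-t}$.

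I do not expect a real obstacle here. The only delicate point is ensuring that $M$ contains at least one honest vector to serve as the anchor $h_0$ for the Byzantine vectors, and this is guaranteed by $n>2t$. Using the honest center $C^*_{\MEB}$ directly, instead of an honest anchor, is what yields the additive $r^*$ term rather than something weaker.
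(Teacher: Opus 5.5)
Your proof is correct and follows the paper's argument step for step. You bound $D_M\le 2r^*$, use an honest vector of $M$ as an anchor to place every Byzantine vector of $M$ within $3r^*$ of $C^*_{\MEB}$, and average to get $\bigl(1+\frac{2b}{n-t}\bigr)r^*\le\bigl(1+\frac{2t}{n-t}\bigr)r^*$. Along the way you make explicit two facts the paper only asserts: that $D_M\le 2r^*$ follows by comparison with an all-honest $(n-t)$-subset, and that $n>2t$ guarantees the honest anchor. Your reading of $D_M$ as a maximum over $M$ is the intended one.
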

\begin{proof}
    Let $r^*$ denote the radius of the \MEB around honest inputs $H$, with $|H| \geq n-t$ and let $M$ be the subset of $n-t$ nodes which have the smallest diameter $D_M$. The diameter of subset $M$ is at most twice the radius of the true \MEB, i.e. $D_M \leq 2r^*$. 
    Now, the distance between MDA and the center of the ball is 
    \begin{align} \label{eq:mda}
    \|MDA-C^*\|_2 \leq \frac{1}{n-t}\sum_{x\in M} \|x-C^*\|_2.
    \end{align} The subset $M$ consists of exactly $n-t$ nodes, implying that there is at least one honest node contained in $M$. Therefore, we can split the subset $M$ into honest $H$ and Byzantine $B$ subset of nodes. For each $h\in H$, the distance to the center of the \MEB is bounded by $\|h-C^*\|_2 \leq r^*$. For each $b\in B$, using triangle inequality, we show that $b$ is at distance at most $3r^*$ from the center of the \MEB:
     $\|b - C_{\MEB}^*\|_2< \| b-u \|_2 + \|h-C_{\MEB}^*\|_2$, where $u$ is a honest node contained in $M$. 
     Since $u\in \MEB$, then $\|u-C_{\MEB}^*\|_2=r^*$, whereas ${u,v}\in M$ implies $\| b-u \|_2 \leq 2r^*$. Together it leads to $\|b - C_{\MEB}^*\|_2 \leq 3r^*$. 
     Finally, we can plug these results into Equation \ref{eq:mda}. 
     \begin{align*}
         \|MDA-C^*\|_2 &\leq \frac{|H|\cdot r^*+|B|\cdot 3r^*}{n-t} \leq \frac{(n-t-|B|)\cdot r^*+|B|\cdot 3r^*}{n-t} \\
       &\leq \left( 1+\frac{2|B|}{n-t}\right)r^* \leq \left(1+\frac{2t}{n-t}\right)r^*
     \end{align*}
\end{proof}

Next, we examine the medoid. Unlike MDA, the medoid minimizes the total distance to all inputs and restricts the output to be one of the input nodes. 

%In the following, we present a summary of our results that are formally stated and proved in the appendix:

We show that the medoid satisfies $\frac{3n-2t}{n-2t}$-\MEB validity and prove that it cannot satisfy $c$-\MEB validity, for $c\leq2$.

\begin{theorem}
    Medoid satisfies $\frac{3n-2t}{n-2t}$-\MEB validity.
\end{theorem}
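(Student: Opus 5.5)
We compare the medoid's objective value with that of a well-chosen honest input and then convert the resulting inequality into a distance bound to $C^*_{\MEB}$. Let $y$ be the medoid, $C^*=C^*_{\MEB}$ and $r^*$ the center and radius of the honest \MEB, and $B$ the Byzantine inputs, with $|B|\le t$ and $|H|\ge n-t$. Write $D=\|y-C^*\|_2$. Since the bound exceeds $1$, we may assume $y$ is Byzantine with $D>r^*$; otherwise it is honest and $D\le r^*$.

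\textbf{Upper bound on the medoid cost via a comparison point.} Pick any honest input $h_0\in H$. Then $\|h_0-C^*\|_2\le r^*$, and optimality of the medoid gives
\[
\sum_j\|y-v_j\|_2\le\sum_j\|h_0-v_j\|_2 .
\]
We bound the right-hand side term by term using the triangle inequality through $C^*$. For each $v_j$,
\[
\|h_0-v_j\|_2\le r^*+\|v_j-C^*\|_2 .
\]
For honest $v_j$ this is at most $2r^*$.

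\textbf{Lower bound on the medoid cost.} We also use the triangle inequality through $C^*$, now in the reverse direction:
\[
\|y-v_j\|_2\ge D-\|v_j-C^*\|_2 .
\]
For honest $v_j$ this gives at least $D-r^*$. For Byzantine $v_j$, the $\|v_j-C^*\|_2$ terms would otherwise blow up. We handle them by canceling directly: on both sides, write
\[
\|y-b\|_2\ge\|h_0-b\|_2-\|y-h_0\|_2 ,
\]
and note that $\|y-h_0\|_2\le D+r^*$. The Byzantine terms then cancel, leaving
\[
|H|(D-r^*)-|B|(D+r^*)\le |H|\cdot 2r^*,
\]
and we solve for $D$.

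\textbf{Solving for $D$ and the main obstacle.} Plugging in the worst case $|H|=n-t$ and $|B|=t$ gives
\[
(n-2t)D\le 3(n-t)r^*+tr^*=(3n-2t)r^*,
\]
which yields $D\le\frac{3n-2t}{n-2t}r^*$ as claimed. More generally, the bound $\frac{3|H|+|B|}{|H|-|B|}$ is decreasing in $|H|$, so the worst case is indeed $|B|=t$. The delicate step is the treatment of the Byzantine summands: naive bounds on $\|b-C^*\|_2$ are unavailable. I will therefore make sure the cancellation uses only $\|y-b\|_2-\|h_0-b\|_2\ge-\|y-h_0\|_2$. This is also exactly where the factor $|B|(D+r^*)$, and hence the need for $n>2t$, enters.
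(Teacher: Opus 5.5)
Your proof is correct and follows the paper's argument. Both compare the medoid's cost against an honest input, bound the honest terms through $C^*$, and cancel the Byzantine terms using $\|h-b\|_2\le\|h-m\|_2+\|m-b\|_2$ with $\|h-m\|_2\le D+r^*$, which yields $(|H|-|B|)D\le(3|H|+|B|)r^*$. The only difference is cosmetic: you keep $|B|$ and argue monotonicity explicitly, whereas the paper replaces $|B|$ by $t$ up front and then sets $|H|=n-t$.
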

\begin{proof}
    Let $P$ be the set of points, consisting of honest nodes in subset $H$ and Byzantine nodes in subset $B$. Let the true \MEB have the center $C^*$ with radius $r^*$. Denote the medoid by $m$ and the distance between the medoid and the center of \MEB by $D= \|m-C^*\|_2$. For each honest node $h\in H$, the following is true per definition: $\sum_{p\in P}\|m-p\|_2 \leq \sum_{p\in P}\|h-p\|_2$. We can split the set $P$ into honest and Byzantine nodes:
    
    \begin{equation}\label{eq:split}
        \sum_{u\in H}\|m-u\|_2 + \sum_{b\in B}\|m-b\|_2 \leq \sum_{u\in H}\|h-u\|_2 + \sum_{b\in B}\|h-b\|_2 \\
    \end{equation}
    For any honest  $u\in H$
    \begin{align*}
       \|m-u\|_2 &\geq \|m-C^*\|_2-\|u-C^*\|_2 \geq D-r^* 
    \end{align*}
    holds due to the triangle inequality.
    \begin{align*}
        \sum_{u\in H}\|m-u\|_2 &\geq | H| \cdot (D-r^*) \quad\forall u \in H.
    \end{align*}
    Now, for the right part of the Equation \ref{eq:split}, we bound $\sum_{u\in H}\|h-u\|_2 \leq | H| \cdot 2r^*$, as two honest nodes are at most $2r^*$ apart. Then, for any $b\in B$ the following triangle inequality applies:
    \begin{align*}
        \|h-b\|_2 &\leq \|h-m\|_2 + \|m-b\|_2 \\
        \sum_{b\in B}\|h-b\|_2 &\leq t\cdot \|h-m\|_2 + \sum_{b\in B}\|m-b\|_2 \quad \forall b \in B.\notag
    \end{align*}
    Now we plug this into Equation \ref{eq:split}:
    \begin{align*}
        | H| \cdot (D-r^*) +\sum_{b\in B}\|m-b\|_2 &\leq | H| \cdot 2r^* + t\cdot \|h-m\|_2 + \sum_{b\in B}\|m-b\|_2 \\
        | H| \cdot (D-r^*) &\leq | H| \cdot 2r^* + t\cdot (\|m-C^*\|_2+\|h-C^*\|_2) \\
        | H| \cdot (D-r^*) &\leq | H| \cdot 2r^* + t\cdot (D+r^*) \\
        (|H| -t)\cdot D &\leq r^* (3| H| +t)\\
    \end{align*}
    In the worst case if $|H| =n-t$, then: $D\leq \frac{3n-2t}{n-2t}r^*$, meaning the distance between the medoid and the center of the \MEB is at most $\frac{3n-2t}{n-2t}r^*$.
\end{proof}

\begin{theorem}\label{lem:medoid-impossibility}
    Medoid cannot satisfy $c$-\MEB validity for $c\leq 2$.
\end{theorem}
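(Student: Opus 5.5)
The plan is to build an explicit instance in which the medoid is forced to be a Byzantine input lying at distance strictly greater than $2r^*$ from the honest \MEB center $C^*$. Since the medoid must be one of the input points, it suffices to make a tight cluster of Byzantine vectors win the medoid objective. This is hopeless in one dimension, because there the medoid behaves like a median and stays on the honest-majority side. So the construction will exploit high dimension: the honest vectors are mutually far apart, while the Byzantine vectors all coincide.

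\textbf{Construction.} Take $n=2t+1$ with $t\ge 2$, so that $|H|=n-t=t+1$, and dimension $d\ge n-t$.
\begin{itemize}
\item Place the $t+1$ honest vectors at the vertices of a regular simplex inscribed in the unit sphere centered at the origin, inside a $t$-dimensional subspace. Then $C^*=0$ and $r^*=1$.
\item Each pairwise honest distance is $\sqrt{2(t+1)/t}\ge\sqrt{2}$.
\item Place all $t$ Byzantine vectors at a single point $p=x\,e$, where $e$ is a unit vector orthogonal to that subspace and $x=2+\varepsilon$ for small $\varepsilon>0$.
\item Then every honest vector is at distance exactly $\sqrt{x^2+1}$ from $p$.
\end{itemize}

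\textbf{Comparing the medoid objective.} The sum of distances from $p$ to all points is $(t+1)\sqrt{x^2+1}$, since the other Byzantine copies contribute $0$. The sum of distances from any honest $h$ is at least $t\sqrt{2}+t\sqrt{x^2+1}$. Hence $p$ strictly beats every honest point whenever
$$\sqrt{x^2+1}<t\sqrt{2}.$$
For $x$ close to $2$ the left side is about $\sqrt{5}\approx 2.236$, while the right side is at least $2\sqrt{2}\approx 2.83$ for $t\ge 2$. So the inequality holds strictly, and no tie-breaking issues arise. The medoid therefore equals $p$, with $\|p-C^*\|_2=2+\varepsilon>2r^*$. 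This violates $c$-\MEB validity for every $c\le 2$.

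\textbf{Generalizing and main obstacle.} For general $n>2t$ the same template applies. Comparing the two objectives gives the condition
$$(n-2t)\sqrt{x^2+1}<(n-t-1)\sqrt{2(n-t)/(n-t-1)}.$$
This holds for $x$ slightly above $2$ unless $n-2t$ is very large relative to $n-t$. The statement, however, only needs one bad instance, so the single instance $n=2t+1$ suffices. The main point to check carefully is that \MEB$(H)$ really has center $0$ and radius $1$. This holds because the regular simplex is inscribed in the unit sphere and its circumcenter lies in its convex hull. The remaining algebra is routine.
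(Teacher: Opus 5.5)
Your proof is correct, but it works by a different mechanism than the paper's.

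\textbf{The paper's route.} The paper stays in the plane. With $n=2t+1$ it uses six clusters, arranged so that two symmetric candidate honest sets, $A\cup B\cup E$ and $E\cup C\cup D$, each have an MEB of radius $1$ (centered at $(1,0)$ and $(3,0)$). It asserts that the medoid is a point of $B$ or $C$. Whichever is chosen, the adversary declares the opposite side honest, leaving the medoid at distance $\sqrt5>2$ from the honest center. So the paper exploits the ambiguity about which $(n-t)$-subset is honest: the medoid is an input that is honest in one scenario and Byzantine in the other.

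\textbf{Your route.} You instead make the Byzantine cluster itself the medoid. The honest points form a regular simplex with pairwise distances $\ge\sqrt2$. The $t$ coincident Byzantine copies sit on an orthogonal axis and win the sum-of-distances objective because their mutual distances vanish.

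\textbf{What your route buys.} First, the medoid comparison is fully verified: a single strict inequality, valid for all $t\ge 2$, with no tie-breaking. The paper's identification of the medoid is not checked, and it in fact requires $t$ even and reasonably large. A direct computation shows that for every even $t\le 8$ the cluster $E$ at $(2,0)$ strictly beats $B$. Second, you get a stronger conclusion. Your exact condition is $\sqrt{x^2+1}<\sqrt{2t(t+1)}$, so $x$ can be taken close to $\sqrt{2t(t+1)-1}\approx\sqrt2\,t$. This shows that for $n=2t+1$ the medoid's relaxation factor must grow linearly in $t$, the same order as the paper's upper bound $\frac{3n-2t}{n-2t}=4t+3$.

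\textbf{What the paper's route buys.} It works already for $d=2$, whereas yours needs $d\ge t+1\ge 3$. So high dimension is not actually needed for the $c\le 2$ claim; only one dimension is hopeless, as you note.
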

\begin{proof}
    Consider the following example with $n=2t+1$ nodes divided into subsets $A, B, C, D, E$ and $F$. Place subsets $A$ and $D$ of size $\frac{t}{2}-1$ in $(0,0)$ and $(4,0)$, respectively. Then place subsets $B$ and $C$ of size $\frac{t}{2}$ in $(1,1)$ and $(3,1)$. Finally, subset $E$ which consists of 2 nodes is found in $(2,0)$ and subset $F$ with one node in $(2,x)$, where $x$ is an arbitrary coordinate. The medoid of these input nodes is a node from subset $B$ or $C$, as both subsets consist of nodes which minimize the sum of distance to all other nodes and the construction is symmetric. 
    Note that the minimum enclosing ball is defined with at least $t+1$ nodes. Without knowing where the Byzantine parties are at, we consider three extreme cases for minimum enclosing ball. The first possible \MEB could contain all nodes in subsets $A,B$ and $E$, with radius $r=1$. The second possible \MEB could consist of subsets $E, C$ and $D$, with radius $r=1$. Finally the third possible \MEB can be computed on all nodes from subset $B, C$ and $F$, with radius $r=\frac{x}{2}$. Now, if the medoid is a node from subset $B$, the true \MEB could be the one consisting of nodes from subsets $C,E$ and $D$ and hence the medoid is outside the true \MEB. Similarly, if the medoid is a node from subset $C$, then the true \MEB could be around $A,B$ and $E$. Hence, the medoid is outside the \MEB. In both cases, the distance from the medoid to the true \MEB is $\sqrt{5}$, which implies $\sqrt{5} \le c\cdot 1$, for all $c\leq2$. Therefore, medoid cannot satisfy $c$-\MEB for $c\leq2$.
    %\todo[inline]{add a figure}
\end{proof}

% \begin{theorem}
%     Medoid satisfies $\frac{3n-2t}{n-2t}$-\MEB validity.
% \end{theorem}
% \begin{proof}
%     See Theorem~\ref{thm:medoid-meb-appendix}.
% \end{proof}

% \begin{lemma}
%     MDA satisfies $3$-\MEB validity for $n>3t$.
% \end{lemma}
% \begin{proof}
%     Let $r^*$ denote the radius of the \MEB around honest inputs $C$, with $| C| \geq n-t$ and let $M$ be the subset of $n-t$ nodes which have the smallest diameter $D_M$. The diameter of subset $M$ is at most twice the radius of the true \MEB, i.e. $D_M \leq 2r^*$. Since $M$ contains exactly $n-t$ nodes, and $n>3t$, there must be at least one honest node $v$ contained in $M$. Per  definition, $v$ is inside the true \MEB. 
%     Using triangle inequality, we show that each node in $u\in M$ is at distance at most $3r^*$ from the center of the \MEB:
%     $\|u - C_{\MEB}^*\|< \| u-v \| + \|v-C_{\MEB}^*\|$. 
%     Since $v\in \MEB$, then $\|v-C_{\MEB}^*\|=r^*$, whereas ${u,v}\in M$ implies $\| u-v \| \leq 2r^*$. Together it leads to $\|u - C_{\MEB}^*\| \leq 3r^*$. Therefore, $M \in 3-\MEB$. Finally, the $3-\MEB$ is convex, so the average of the nodes in $M$ is also inside $3-\MEB$. 
% \end{proof}

% \begin{theorem}\label{lem:medoid}
%     Medoid cannot satisfy $c$-\MEB validity for $c\leq 2$.
% \end{theorem}
% \begin{proof}
%     See Theorem~\ref{lem:medoid-impossibility-appendix}.
% \end{proof}

The geometric median, unlike the medoid, is not restricted to outputting input points and instead minimizes the sum of total Euclidean distances to all nodes over $\mathbb{R}^d$. 
%Also here, we present a summary of the results that are further discussed in the appendix:

% \begin{itemize}
%     \item Geometric median satisfies $\frac{2(n-t)}{n-2t}$-\MEB validity (Theorem~\ref{thm:geom-meb-appendix}),
%     \item Geometric median does not satisfy $1.08$-\MEB validity (Theorem~\ref{lem:geom-impossibility-appendix}),
%     \item Geometric median does not satisfy convex validity (Theorem~\ref{thm:geom-convex-appendix}).
% \end{itemize}

\begin{theorem}
    Geometric median satisfies $\frac{2(n-t)}{n-2t}$-\MEB validity.
\end{theorem}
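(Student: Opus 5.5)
Let $\mu$ be the geometric median of $P=H\cup B$, let $C^*$ and $r^*$ be the center and radius of the honest \MEB, and set $D=\|\mu-C^*\|_2$. The goal is $D\le \frac{2(n-t)}{n-2t}r^*$. We may assume $D>r^*$, since otherwise the claim is trivial (the factor is at least $2$).

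The argument compares the objective $f(y)=\sum_{p\in P}\|y-p\|_2$ at $\mu$ and at the comparison point $C^*$. Optimality of $\mu$ gives $f(\mu)\le f(C^*)$. Split both sides into honest and Byzantine contributions:
\begin{equation*}
\sum_{u\in H}\|\mu-u\|_2+\sum_{b\in B}\|\mu-b\|_2\le \sum_{u\in H}\|C^*-u\|_2+\sum_{b\in B}\|C^*-b\|_2 .
\end{equation*}
Bound each group by the triangle inequality:
\begin{itemize}
\item For honest $u$, $\|\mu-u\|_2\ge D-\|u-C^*\|_2$.
\item For Byzantine $b$, $\|C^*-b\|_2\le \|C^*-\mu\|_2+\|\mu-b\|_2=D+\|\mu-b\|_2$.
\end{itemize}
The Byzantine terms $\sum_b\|\mu-b\|_2$ then cancel from both sides. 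This leaves
\begin{equation*}
|H|\,D-\sum_{u\in H}\|u-C^*\|_2\le \sum_{u\in H}\|u-C^*\|_2+|B|\,D .
\end{equation*}
Using $\|u-C^*\|_2\le r^*$, this gives $(|H|-|B|)D\le 2|H|r^*$.

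The remaining step is to optimize over the split. Using $|B|\le t$ and $|H|\ge n-t$, the ratio $\frac{2|H|}{|H|-|B|}$ is largest when $|B|=t$ and $|H|=n-t$. If $|B|<t$, then $|H|=n-|B|$, and one checks that $\frac{2(n-|B|)}{n-2|B|}$ is increasing in $|B|$. This yields $D\le\frac{2(n-t)}{n-2t}r^*$, which is well defined because $n>2t$.

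I expect little difficulty here; the proof is essentially the medoid argument made cleaner. The key choice is to compare against $C^*$ itself rather than against an honest point, which avoids the extra $r^*$ slack that produced the worse medoid constant. The only care needed is the sign handling in the honest lower bound, and that is covered by the trivial case $D\le r^*$.
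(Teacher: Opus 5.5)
Your proposal is correct and matches the paper's proof essentially step for step. Like the paper, you compare the objective at $\mu$ and at $C^*$, lower-bound the honest terms by the triangle inequality, cancel the Byzantine terms, reach $(|H|-|B|)D\le 2|H|r^*$, and take the worst case $|H|=n-t$. The differences are cosmetic. You track $|B|$ and check monotonicity, whereas the paper bounds $|B|$ by $t$ directly. Your case split $D\le r^*$ is also unnecessary, since $\|\mu-u\|_2\ge D-\|u-C^*\|_2$ holds unconditionally.
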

\begin{proof}
    The proof is a modification of Lemma 24 \cite{10.1145/2897518.2897647} and Lemma C.6 \cite{10.5555/3737916.3739386}.
    Let $P$ be the set of points, consisting of honest nodes in subset $H$ and Byzantine nodes in subset $B$. Let the true \MEB have the center $C^*$ with radius $r^*$. Denote the geometric median by $\mu$ and the distance between the geometric median and the center of \MEB by $D= \|\mu-C^*\|_2$. The following is true per definition of the geometric median: $ \sum_{p \in P}\|\mu-p\| \leq  \sum_{p\in P}\|C^*-p\|_2$. We can split the set $P$ into honest and Byzantine nodes:
    \begin{equation} \label{eq:geom-split}
     \sum_{h\in H}\|\mu-h\|_2+ \sum_{b \in B}\|\mu-b\|_2 \leq  \sum_{h \in H}\|C^*-h\|_2+\sum_{b \in B}\|C^*-b\|_2.
    \end{equation}
    For any honest $h \in H$: $\|\mu-h\|_2 \geq \|\mu-C^*\|_2 - \|h-C^*\|_2 \geq D-r^*$.
    
    \noindent For all honest nodes $h\in H$: $\sum_{h\in H}\|\mu-h\|_2 \geq |H| \cdot (D-r^*)$.
    
    Now, for the right part of the Equation \ref{eq:geom-split}, we bound $\sum_{h\in H}\|C^*-h\|_2 \leq |H|\cdot r^*$, as any honest node is at most $r^*$ away from the center of the \MEB. Then, for any $b\in B$ the following triangle inequality applies:
    \begin{align*}
        \|C^*-b\|_2 &\leq \|C^*-\mu\|_2 + \|\mu-b\|_2 \\
        \sum_{b \in B}\|C^*-b\|_2 &\leq t\cdot D+ \sum_{b \in B}\|\mu-b\|_2 \quad \forall  b \in B.
    \end{align*}
    Now, we plug this into Equation \ref{eq:geom-split}:
    \begin{align*}
        |H| \cdot (D-r^*) + \sum_{b \in B}\|\mu-b\|_2 &\leq |H| \cdot r^* + t\cdot D +\sum_{b \in B}\|\mu-b\|_2 \\
        D(|H|-t) &\leq 2\cdot |H|\cdot r^*. 
    \end{align*}
    In the worst case if $|H|=n-t$, then $D\leq \frac{2(n-t)}{n-2t}r^*$. This implies that the distance between the geometric median and the center of the \MEB is at most $\frac{2(n-t)}{n-2t}r^*$
\end{proof}

\begin{theorem}
    Geometric median does not satisfy $1.08$-\MEB validity.
\end{theorem}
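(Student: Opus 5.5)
The plan is to give an explicit small counterexample, in the spirit of the medoid lower bound (Theorem~\ref{lem:medoid-impossibility}). I would build a planar configuration with $n=2t+1$ in which one candidate honest set $H$ of size $n-t$ has a very simple \MEB, and the $t$ Byzantine inputs pull the geometric median out of that ball by more than $1.08\,r^*$. The honest set should sit symmetrically on a horizontal line and the Byzantine nodes should be stacked above it. Then symmetry and convexity reduce the computation of the median to a one-variable problem.

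\textbf{Construction.} I would take $t=4$, $n=9$, $d=2$:
\begin{itemize}
\item two honest nodes at $(-1,0)$, two at $(1,0)$ and one at $(0,0)$, so $|H|=5=n-t$;
\item the four Byzantine nodes all at $(0,h)$, with $h=2$ (any $h$ larger than the stationary point below works).
\end{itemize}
The honest \MEB is determined by the diametral pair $(\pm1,0)$. It has center $C^*=(0,0)$ and radius $r^*=1$.

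\textbf{Locating the median.}
\begin{enumerate}
\item The objective $f(\mu)=\sum_p\|\mu-p\|_2$ is convex and invariant under the reflection $x\mapsto -x$. Averaging a minimizer with its mirror image gives a minimizer on the $y$-axis.
\item On the $y$-axis, $g(y)=4|h-y|+4\sqrt{1+y^2}+|y|$. For $y\le 0$ we have $g'<0$ (all terms decrease), and for $y\ge h$ we have $g'>0$.
\item For $0<y<h$, $g'(y)=-4+\frac{4y}{\sqrt{1+y^2}}+1$. This vanishes exactly when $\frac{y}{\sqrt{1+y^2}}=\frac34$, i.e.\ at $y^\star=\frac{3/4}{\sqrt{1-9/16}}=\frac{3}{\sqrt7}\approx 1.134<h$.
\item Since the points are not collinear, $f$ is strictly convex, so $\mu=(0,3/\sqrt7)$ is the geometric median.
\end{enumerate}

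\textbf{Conclusion.} The distance of the median from $C^*$ is $3/\sqrt7\approx1.134>1.08=1.08\,r^*$. So in this execution, with $H$ as the true honest set, $1.08$-\MEB validity is violated. Since an aggregation rule cannot know which nodes are Byzantine, this single configuration suffices.

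The main obstacle is rigorously certifying the location of the median, since geometric medians generally have no closed form. The symmetry-plus-convexity reduction to one variable, together with a check of the sign of $g'$ on each of the three intervals, is what I would rely on. The number $1.08$ in the statement suggests the authors used a less favourable (perhaps numerically found) configuration. If the reduction above has a flaw, I would fall back on a numeric verification of a similar clustered configuration, using subgradient optimality conditions at the candidate point.
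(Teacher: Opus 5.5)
Your proof is correct, and it takes a different route from the paper's.

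\textbf{The paper's route.} The paper reuses the geometry of the medoid example. It takes $n=3t-3$, heavy clusters of size $t-2$ at $(1,1)$, $(3,1)$, $(2,0)$, and singletons at $(0,0)$, $(4,0)$, $(2,x)$. An honest set is, e.g., the nodes at $(0,0)$, $(1,1)$, $(2,0)$, whose \MEB has center $(1,0)$ and radius $1$. The paper then asserts that the geometric median is the Fermat point $(2,1-\tfrac{1}{\sqrt3})$ of the three heavy clusters, at distance $\sqrt{7/3-2/\sqrt3}\approx 1.0857$ from the unit-ball centers. This identification holds only approximately, since the singletons shift the median by $O(1/t)$, and the paper does not quantify this against the thin margin over $1.08$.

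\textbf{Your route.} Your symmetric configuration with $n=2t+1$, $t=4$ admits an exact computation. Symmetry plus strict convexity reduce the problem to one variable, and the median $(0,3/\sqrt7)$ is certified rigorously. You could even skip the symmetrization: $(0,3/\sqrt7)$ is not a data point, and the gradient there is $(0,\,3+1-4)=(0,0)$, which gives global optimality by convexity. You also get a larger constant, $3/\sqrt7\approx 1.134$. The same pattern generalizes: put $a$ honest nodes at each of $(\pm1,0)$, one at the origin, and $t=2a$ Byzantine nodes far up the axis. The median then sits at height $(2a-1)/\sqrt{4a-1}\sim\sqrt a$. So for $n=2t+1$ the relaxation factor of the geometric median is unbounded, consistent with the $\frac{2(n-t)}{n-2t}$ upper bound.

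\textbf{What each buys.} The paper's example works with Byzantine fraction about $1/3$, where that upper bound tends to $4$. Your pattern degrades in that regime: with $t$ honest nodes at each of $(\pm1,0)$ and $t$ Byzantine nodes on the axis, the median sits at height $1/\sqrt3<1$. So your example establishes the statement cleanly but does not speak to the regime the paper's example addresses.
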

\begin{proof}
    \begin{proof}
    Consider an example similar to the example from Lemma \ref{lem:medoid-impossibility} with $n=3t-3$ nodes divided into subsets $A, B, C, D, E$ and $F$. Subsets $A$ and $D$ contain one node each and are placed in in $(0,0)$ and $(4,0)$, respectively. Then place subsets $B$ and $C$ of size $t-2$ in $(1,1)$ and $(3,1)$. Finally, subset $E$ which consists of $t-2$ nodes is found in $(2,0)$ and subset $F$ with one node in $(2,x)$, where $x$ is an arbitrary coordinate. 
    We consider three extreme cases for the minimum enclosing ball defined on a subset of size $2t-3$. One possible \MEB contains all nodes in subsets $A$, $B$, and $E$, with radius $r = 1$. A second possible \MEB contains the nodes in subsets $E$, $C$, and $D$, also with radius $r = 1$. A third possible MEB is formed from all nodes in subsets $B$, $C$, and $F$, with radius $r = \frac{x}{2}$.
    Since the construction is symmetric and points in subsets $B, C$ and $E$ dominate over the subsets containing one point, the geometric median is the Fermat's point of the triangle defined by the nodes in subsets $B, C$ and $E$, found in $(2,1-\frac{1}{\sqrt{3}})$.  Note that the geometric median is not contained in any of the three extreme \MEB. The distance between the center of the two \MEB with $r=1$ and the geometric median is $\sqrt{\frac{7}{3}-\frac{2}{\sqrt{3}}}\approx 1.0857$. 
\end{proof}
\end{proof}

\begin{theorem}
    Geometric median does not satisfy convex validity.
\end{theorem}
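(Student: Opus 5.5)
We prove the statement with an explicit counterexample: an input configuration, consistent with the fault model $t<n/2$, in which the geometric median lies outside the convex hull of the honest vectors. Convex validity is a deterministic, distribution-independent requirement. So it suffices to exhibit one such configuration together with one admissible choice of which nodes are Byzantine.

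The construction is planar, and we reuse the idea of the previous proof. Take three honest clusters placed at the vertices of a triangle, with the honest nodes split evenly among them. Then the geometric median of the honest points alone is the Fermat point of that triangle, which lies inside it. Now add the $t$ Byzantine nodes, all at a single far-away location $q$ chosen off the plane of the honest points. We work in $\mathbb{R}^3$, or more simply in $\mathbb{R}^2$ with all honest points on a line, which already suffices. The cleanest version takes the honest nodes collinear. Put $\lceil (n-t)/2\rceil$ honest nodes at $(-1,0)$ and the remaining honest nodes at $(1,0)$. The honest convex hull is then the segment $[-1,1]\times\{0\}$. Place all $t\ge 1$ Byzantine nodes at $(0,x)$ with $x>0$, and choose $n-t$ even so the two honest clusters have equal size $m=(n-t)/2$.

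The key step is to show that the geometric median $\mu$ has strictly positive second coordinate. By symmetry and strict convexity of the objective, the minimizer is unique and lies on the axis $\{(0,y)\}$, so we minimize
\[f(y)=2m\sqrt{1+y^2}+t\,|x-y|.\]
At $y=0$ the derivative from the right is $-t<0$, because the $\sqrt{1+y^2}$ term has zero derivative there. Hence $f$ decreases when moving from $0$ toward $x$, and the minimizer satisfies $y^*>0$. Explicitly, on $(0,x)$ the first-order condition is $2m\,y/\sqrt{1+y^2}=t$. Since $t<2m$ (because $n>2t$), this gives $y^*=t/\sqrt{4m^2-t^2}\in(0,\infty)$, and for $x>y^*$ the minimizer is $(0,y^*)$. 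Thus $\mu=(0,y^*)$ lies off the segment, so it is not in the convex hull of the honest vectors, which violates convex validity.

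The main obstacle is the bookkeeping. We must handle the parity of $n-t$: if it is odd, place the leftover honest node at the origin, and the same derivative argument applies with the one-sided derivative equal to $1-t\le 0$, strictly negative once $t\ge 2. For $t=1$ an even split is always arranged by choosing $n$ appropriately. We must also note that one example for some admissible $(n,t,d)$ suffices to refute the claim that the geometric median satisfies convex validity in general. It is also worth remarking that the witness escapes the hull in a direction orthogonal to the honest affine span, which illustrates the paper's point that low-dimensional honest inputs make convex validity fragile.
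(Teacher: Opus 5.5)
Your counterexample is essentially the paper's. The paper places three clusters at the vertices of a triangle ($t+1$ at $(0,0)$, $t$ at $(1,0)$, $t$ at $(0,1)$, with $n=3t+1$) and declares one cluster Byzantine, so the honest hull collapses to an edge while the geometric median sits near the interior Fermat point. Your symmetric variant replaces the paper's limiting Fermat-point argument with an exact and fully rigorous computation of the median $(0,\,t/\sqrt{4m^2-t^2})$.

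One small addition would match the paper's non-vacuity. Its $n=3t+1$ lies in the regime $n>(d+1)t$ where convex validity is achievable, and you get the same by taking $m>t$. Then every $(n-t)$-subset contains points of both honest clusters, so the safe area is the honest segment itself, and the failure is genuinely due to the geometric median rather than to impossibility.
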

\begin{proof}
    \begin{proof}
Consider the following example with $n=3t+1$ input points. Place $t+1$ points into the origin $(0,0)$, then place $t$ points in $(1,0)$ and the other $t$ points in $(0,1)$. The geometric median of these input points converges to the Fermat's point of the triangle, when $t$ is large. However, the convex hull containing honest nodes could be defined on the line between the origin and one of the sides. Hence, the geometric median is not contained in the convex hull of the honest nodes. 
%\todo[inline]{add a figure, actually probably no figures in this section}
\end{proof}
\end{proof}
%Finally, in Section~\ref{sec:relations} we present the relation of \MEB and $c$-\MEB validity to existing validity conditions for vector aggregation. 

\section{Relations between validity conditions}\label{sec:relations}

To put \MEB validity into perspective with other validity conditions, we make comparisons between \MEB validity and existing validity conditions in vector aggregation. We consider convex validity, $(\delta, p)$-relaxed convex validity and box validity. Since these conditions define different valid regions, it is natural to ask how they relate to one another. In particular, we focus on whether one validity condition implies another, and how they compare quantitatively. We first define considered validity conditions.
\smallsep
%\begin{description}
    \noindent\textbf{Convex Validity:} The agreement vector has to be inside the convex hull of $C$, denoted by $\convexhull(C)$. 
%\end{description}
\smallsep
It has been shown that, to agree on a vector inside the convex hull of $H$ under Byzantine failures, one has to agree inside a so-called \emph{safe area}. This is the intersection of all convex hulls computed on subsets of $n-t$ vectors from all $n$ received vectors. This condition is also the reason why convex validity has a poor resilience of $n>(d+1)t$. For $n\le(d+1)t$, there is no guarantee that the convex hulls have a common intersection.

The second validity condition that we consider in this paper is the box validity: 
\smallsep
%\begin{description}
    \noindent\textbf{Box Validity:} The agreement vector has to be inside the smallest coordinate-parallel hyperrectangle containing $H$, named trusted box \TB. 
%\end{description}
\smallsep
Box validity can be viewed as convex validity applied separately in each dimension. Observe that both the box and the convex validity conditions restrict the agreement vector to only take values in dimensions that are spanned by the convex hull or the hyperrectangle. 
\smallsep
%\begin{description}
    \noindent\textbf{$\bm{(\delta,p)}$-relaxed convex validity:} The agreement vector has to be inside the set $\{u\big\vert \Vert u - v \Vert_p \le \delta\ \ \forall\ v\in \convexhull(H) \}$.
%\end{description}
\smallsep
Where $\convexhull(H)$ denotes the convex hull of the set $H$, and $0<p\le\infty$.

We start with convex validity and show that convex validity implies \MEB validity.

\begin{lemma}
    Convex validity implies \MEB validity.
\end{lemma}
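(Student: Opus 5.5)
The plan is to show the set inclusion $\convexhull(H)\subseteq \MEB(H)$. Once this holds, any output $y$ that satisfies convex validity lies in $\convexhull(H)$, hence in $\MEB(H)$, which is exactly \MEB validity. The implication is therefore purely geometric: it depends only on the honest input set and not on the algorithm or on the Byzantine vectors.

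To prove the inclusion, let $C^*_{\MEB}$ and $r^*$ be the center and radius of $\MEB(H)$. By the definition of the minimum enclosing ball, every honest vector $h\in H$ satisfies $\|h-C^*_{\MEB}\|_2\le r^*$. Any $y\in\convexhull(H)$ can be written as $y=\sum_{h\in H}\lambda_h h$ with $\lambda_h\ge 0$ and $\sum_h\lambda_h=1$. Then $y-C^*_{\MEB}=\sum_h\lambda_h(h-C^*_{\MEB})$. By the triangle inequality and positive homogeneity of the Euclidean norm, $\|y-C^*_{\MEB}\|_2\le\sum_h\lambda_h\|h-C^*_{\MEB}\|_2\le r^*$. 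Equivalently, a closed Euclidean ball is convex and contains $H$, so it contains the smallest convex set containing $H$. This is the same fact the paper already used in the paragraph on honest-gradient bias.

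No step is a real obstacle. The only point to handle with care is the reading of the definitions: convex validity is stated with respect to the honest inputs, i.e. $\convexhull(H)$, and $\MEB$ is taken to be the closed ball, so boundary points are included. I would also note that the inclusion is generally strict. For example, three honest points forming an equilateral triangle in $\mathbb{R}^2$ have a circumscribed disc strictly larger than the triangle. So the converse implication does not hold, which motivates the subsequent comparisons in this section.
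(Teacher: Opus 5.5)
Your proposal is correct and takes the same route as the paper. The paper proves the lemma by asserting $\convexhull(H)\subseteq\MEB(H)$ because both are defined on the honest set $H$; your convex-combination argument (equivalently, a closed ball is convex and contains $H$) supplies the one-line justification the paper leaves implicit.
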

\begin{proof}
    The convex hull \convexhull and minimum enclosing ball \MEB are defined on the same set of honest nodes $H$. Hence, $\convexhull \subseteq \MEB$.
\end{proof}

Next, we show that the reversed direction does not hold, i.e. \MEB does not necessarily satisfy convex validity.

\begin{lemma}
    \MEB validity does not imply convex validity.
\end{lemma}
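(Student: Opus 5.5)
The plan is to exhibit a single counterexample: a configuration of honest vectors together with an output point that lies in the \MEB of the honest vectors but outside their convex hull. Since \MEB validity only constrains the output to the ball, any aggregation rule that happens to return such a point satisfies \MEB validity while violating convex validity. This suffices to show that \MEB validity does not imply convex validity.

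The concrete choice is to work in $\mathbb{R}^2$ (the same example embeds into any $\mathbb{R}^d$ with $d\ge 2$ by padding with zero coordinates). Take the honest inputs to be two distinct points, for instance $h_1=(-1,0)$ and $h_2=(1,0)$; if more honest nodes are wanted, duplicate these points. Then $\convexhull(H)$ is the segment between $h_1$ and $h_2$. We have to justify that $\MEB(H)$ is the unit disk centered at the origin: no smaller ball can contain two points at distance $2$, and the unit disk contains both. Now pick the output $y=(0,1/2)$. Then $\|y\|_2=1/2\le 1$, so $y\in\MEB(H)$, while its second coordinate is nonzero, so $y\notin\convexhull(H)$.

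To tie this to an algorithm, note that the statement is about the validity conditions themselves: the valid region of convex validity, $\convexhull(H)$, is a proper subset of the valid region of \MEB validity, $\MEB(H)$. Hence an output accepted by the latter need not be accepted by the former. If a more "generic" example is preferred, one can instead take three honest points forming an equilateral triangle and choose $y$ in the circumscribed disk just outside one edge. The argument is the same.

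There is essentially no hard step. The only point requiring care is certifying that the chosen ball really is the minimum enclosing ball, which for two antipodal points is immediate from the diameter argument above. The degenerate (lower-dimensional) hull makes the separation between the two regions as transparent as possible.
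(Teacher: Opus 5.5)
Your proposal is correct and takes the same approach as the paper. Both use two clusters of honest points whose convex hull degenerates to a segment, while the minimum enclosing ball is a full-dimensional disk and so is not contained in the hull. You are slightly more explicit than the paper: you name a concrete witness $y=(0,1/2)$ in the ball but off the segment, and you justify via the diameter argument that the disk really is the minimum enclosing ball.
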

\begin{proof}
    Consider the following configuration: place $\frac{|H|}{2}$ of the honest nodes in (0,0) and the remaining $\frac{|H|}{2}$ in (0,1). The convex hull \convexhull is the line segment connecting the two points. In contrast, \MEB is a $2d$ ball centered in the midpoint of the line segment, with radius $r^*=\frac{1}{2}$. The \convexhull is collapsed to the line segment in one dimension. Hence, $\MEB\nsubseteq \convexhull$, and thus $\MEB$ validity does not imply convex validity.
\end{proof}

%This should be similar to the box, since it is a subdimensional thing. And we should make clear that the fact that the ball is fully-dimensional makes our relaxation possible.

% \begin{lemma}
%      $c$-relaxed \MEB validity  satisfies $(2c\cdot diam,2)$-relaxed convex validity , where $diam(C)$ denotes the largest smallest distance between any pair of points in $C$.
% \end{lemma}

We now turn our focus to the $(\delta, p)$-relaxed convex validity condition, since it requires that the output of the algorithm is at distance at most $\delta$ from the convex hull. This relaxation increases the valid output area in all dimensions, unlike the convex hull which might be only a line segment.  

\begin{lemma}
     $c$-relaxed \MEB validity implies $((c+1)\cdot r^*,2)$-relaxed convex validity , where $r^*$ denotes the radius of the minimum enclosing ball \MEB.
\end{lemma}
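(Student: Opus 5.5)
The plan is to reuse the triangle-inequality argument from the paragraph on honest-gradient bias. There, the honest average $\bar h$ was replaced by an arbitrary point of the honest convex hull. Read literally, the definition of $(\delta,2)$-relaxed convex validity asks that the output $y$ be within Euclidean distance $\delta$ of \emph{every} point $v\in\convexhull(H)$. This is the stronger reading: it implies the weaker reading that $y$ is within $\delta$ of \emph{some} point of the hull. So I will prove the universal version with $\delta=(c+1)r^*$, and that covers both interpretations.

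First, let $y$ be any output satisfying $c$-\MEB validity, so $\|y-C^*_{\MEB}\|_2\le c\, r^*$ by definition. Second, fix an arbitrary $v\in\convexhull(H)$. The \MEB of $H$ is a closed Euclidean ball, hence convex, and it contains every point of $H$. Therefore it contains $\convexhull(H)$; this is exactly the inclusion used in the proof that convex validity implies \MEB validity. It follows that $\|v-C^*_{\MEB}\|_2\le r^*$. Third, apply the triangle inequality:
\[
\|y-v\|_2\le \|y-C^*_{\MEB}\|_2+\|C^*_{\MEB}-v\|_2\le c\,r^*+r^*=(c+1)\,r^*.
\]
Since $v$ was arbitrary, $y$ lies in $\{u \mid \|u-v\|_2\le (c+1)r^* \ \forall v\in\convexhull(H)\}$. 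This is precisely $((c+1)r^*,2)$-relaxed convex validity.

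There is no real technical obstacle. The only point needing care is to state explicitly that $\convexhull(H)\subseteq\MEB$, which holds because the ball is convex. One should also note that the bound holds uniformly over the hull, so the universal quantifier in the definition is met. Optionally, I would remark that the constant cannot be improved in general: take $H$ to be two antipodal points on the \MEB and $y$ on the opposite side at distance $c r^*$ along the same axis. Then the far endpoint of the segment is at distance exactly $(c+1)r^*$, so the $(c+1)$ factor is tight under the universal reading.
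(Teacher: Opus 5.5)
Your proof is correct and is essentially the paper's argument: fix $y$ with $\|y-C^*\|_2\le c\,r^*$, take an arbitrary point $x$ of the honest convex hull, and use the triangle inequality through the \MEB center to get $\|y-x\|_2\le\|y-C^*\|_2+\|C^*-x\|_2\le (c+1)r^*$. You are a bit more careful than the paper: you justify $\|x-C^*\|_2\le r^*$ explicitly via $\mathrm{Conv}(H)\subseteq\mathrm{MEB}$ (the ball is convex), and you note that the bound holds for every hull point, as the universal quantifier in the definition requires. The tightness example is a correct addition.
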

\begin{proof}
    Let $y$ be an output point inside the $c$-\MEB, which has center $C^*$ and radius $r^*$. The distance between $y$ and the convex hull of honest nodes \convexhull is bounded by $dist(y,\convexhull) \leq \| y-x\|_2$ for each $x \in \convexhull$. Using triangle inequality, we can derive the following:
    \begin{align*}
        \|y-x\|_2 &\leq \|y-C^*\|_2 + \|C^*-x\|_2 \\
        &\leq c\cdot r^* + r^* \leq (c+1)r^*.
    \end{align*}
    Hence, $c$-\MEB satisfies $((c+1)\cdot r^*,2)$-relaxed convex validity. 
\end{proof}

\begin{lemma}
    $(\delta,2)$-relaxed convex validity implies $\left(1+\frac{2\delta}{diam(H)}\right)$-relaxed \MEB validity in general, where $diam(H)$ denotes the largest smallest distance between any pair of points in $H$.
\end{lemma}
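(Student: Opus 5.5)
We read the $(\delta,2)$-relaxed convex condition in its intended sense: the output $y$ lies within Euclidean distance $\delta$ of some point of $\convexhull(H)$, i.e.\ $dist(y,\convexhull(H))\le\delta$. (The literal "for all $v$" reading would only make the statement stronger.) The plan is a one-line triangle inequality through the nearest point of the convex hull, followed by a comparison of $r^*$ with $diam(H)$.

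\textbf{Step 1 (triangle inequality).} Let $x\in\convexhull(H)$ be a point with $\|y-x\|_2\le\delta$, and let $C^*$, $r^*$ be the center and radius of the honest \MEB. As in the proof of the previous lemma, $\convexhull(H)\subseteq\MEB$, so $\|x-C^*\|_2\le r^*$. Hence
\begin{align*}
\|y-C^*\|_2\le \|y-x\|_2+\|x-C^*\|_2\le \delta + r^* = \left(1+\frac{\delta}{r^*}\right) r^*.
\end{align*}

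\textbf{Step 2 (relating $r^*$ to the diameter).} We need $\frac{\delta}{r^*}\le \frac{2\delta}{diam(H)}$, i.e.\ $r^*\ge diam(H)/2$. Interpreting $diam(H)$ as the largest pairwise distance, take $u,v\in H$ with $\|u-v\|_2=diam(H)$. Both lie in \MEB, so
\begin{align*}
diam(H)=\|u-v\|_2\le \|u-C^*\|_2+\|C^*-v\|_2\le 2r^*.
\end{align*}
Substituting $r^*\ge diam(H)/2$ into Step 1 gives $\|y-C^*\|_2\le\left(1+\frac{2\delta}{diam(H)}\right)r^*$, which is the claimed $c$-\MEB bound.

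\textbf{Main obstacle: the degenerate case.} If $diam(H)=0$, then $r^*=0$ and the factor $\frac{2\delta}{diam(H)}$ is undefined, so we implicitly assume $|H|\ge2$ with distinct points. A second subtlety is the phrase "largest smallest distance" in the statement. If $diam(H)$ were the \emph{smallest} pairwise distance, the inequality $diam(H)\le 2r^*$ still holds, since any pairwise distance is at most the maximum one. So Step 2, and with it the whole argument, goes through under either reading.
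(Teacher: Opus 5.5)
Your proof is correct and follows the paper's argument exactly: you choose $x\in\mathrm{Conv}(H)$ with $\|y-x\|_2\le\delta$, use $\mathrm{Conv}(H)\subseteq\mathrm{MEB}$ to get $\|y-C^*\|_2\le\delta+r^*$, and then convert with $r^*\ge diam(H)/2$. Beyond the paper, you also justify $diam(H)\le 2r^*$, which the paper uses without comment, and you observe that this inequality holds under either reading of ``largest smallest distance.''
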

\begin{proof}
    Let $y$ be a node inside the $(\delta,2)$-relaxed convex hull of the honest nodes. Then, there exists a point $x\in \convexhull$ such that $\|y-x\|_2 \leq \delta$. Further, $\|x-C^*\|_2 \leq r^*$, as $x\in \convexhull \subseteq \MEB$. Now, we can bound the distance between $y$ and the center of the \MEB:
    \begin{align*}
        \|y-C^*\|_2 &\leq \|C^*-x\|_2 + \|y-x\|_2 \\
        &\leq \delta + r^*.
    \end{align*}
    Now, $\delta + r^* = \left(1+ \frac{\delta}{r^*} \right)r^*  \leq \left(1+ \frac{\delta}{\frac{diam(H)}{2}} \right) r^*$. Hence, $(\delta,2)$-relaxed convex validity satisfies $\left(1+\frac{2\delta}{diam(H)}\right)$-relaxed \MEB validity.
    Note that if all honest nodes are in one point, then $diam(H)=0$ and \convexhull and the \MEB lie in the same point. 
\end{proof}

%I think we can compare it to the radius of the minimum enclosing ball, but it may be cleaner to use the diameter instead.\todo{fix}
Finally, we examine box validity and its relation to the \MEB validity. We show that the output of the algorithm satisfying box validity can be at most $\sqrt{d}\cdot r^*$ distance from the \MEB. We also show that satisfying \MEB does not provide any guarantees on box validity. 

\begin{lemma}
    Box validity implies $\sqrt{d}$-\MEB validity.
\end{lemma}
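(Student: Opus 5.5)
The plan is to show that the trusted box \TB of the honest inputs sits inside the axis-parallel cube circumscribing the honest \MEB. After that, bounding the distance from any point of this cube to $C^*_{\MEB}$ takes one line.

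Let $H$ be the honest inputs, with $\MEB$ center $C^* = C^*_{\MEB}$ and radius $r^*$. Take a single coordinate $i\in[d]$. Every $h\in H$ lies in the \MEB, so $\|h-C^*\|_2\le r^*$. Since $|h_i - C^*_i| \le \|h-C^*\|_2$, this gives $h_i\in[C^*_i - r^*,\, C^*_i + r^*]$ for every honest $h$.

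By definition, \TB is the product over $i$ of the intervals $[\min_{h\in H} h_i,\ \max_{h\in H} h_i]$. Each of these intervals lies inside $[C^*_i - r^*, C^*_i + r^*]$, so
$$\TB \subseteq \prod_{i=1}^d [C^*_i - r^*,\, C^*_i + r^*].$$

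Now let $y$ be any output satisfying box validity, so $y\in\TB$. Then $|y_i - C^*_i|\le r^*$ for every coordinate $i$. Summing squares gives $\|y-C^*\|_2 = \sqrt{\sum_{i=1}^d (y_i-C^*_i)^2} \le \sqrt{d}\, r^*$, which is exactly the $\sqrt{d}$-\MEB validity condition.

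There is no real obstacle. The one point to get right is that the cube is centered at $C^*$ itself, not merely some cube of side $2r^*$. This is what lets us measure the distance from $C^*$ rather than from an arbitrary box corner, which would lose a factor of $2$.

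To show the factor is essentially sharp, I would add a remark with a configuration attaining it in the limit. Place honest points at $\pm r^* u_i$ for all $i\in[d]$. Then the \MEB is centered at the origin with radius $r^*$, while \TB is the full cube $[-r^*,r^*]^d$. Its corner $r^*(1,\ldots,1)$ is at distance exactly $\sqrt{d}\,r^*$ from $C^*$.
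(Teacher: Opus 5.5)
Your proposal is correct and follows the paper's proof essentially verbatim: you bound each coordinate by $|h_i - C^*_i|\le r^*$, conclude that \TB lies in the cube of half-side $r^*$ centered at $C^*$, and sum the squares. Your wording that \TB is \emph{contained in} this cube is more accurate than the paper's claim that the box interval \emph{is} $[C^*_i-r^*,C^*_i+r^*]$, and your sharpness example, which the paper does not give, is correct and attains $\sqrt{d}$ exactly rather than only in the limit.
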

\begin{proof}
    The \TB and \MEB are defined on the same set of honest nodes $H$. This implies that for each honest node $h$: $\|h-C^*\|_2 \leq r^* $. For each coordinate $i$: $|h_i-C_i^*|\leq r^*$. Since the \TB is defined with the smallest and largest value in each dimension, its interval is $[C^*_i - r^*, C^*_i +r^*]$. Now, for each point $y \in TB$ the distance to the center of the ball is bounded by $|y_i - C^*_i| \leq r^*$ for each coordinate $i$. Finally, using the Euclidean distance, we can derive the following:
    \begin{align*}
        \| y-C^*\|_2^2 &= \sum_{i=1}^d (y_i - C^*_i)^2 
        \leq \sum_{i=1}^d (r^*)^2 
        \leq d(r^*)^2.
    \end{align*}
    Hence, each point $y\in TB$ is at most $\sqrt{d}\cdot r^*$ distance away from the \MEB. 
\end{proof}

\begin{lemma}
    \MEB validity does not imply box validity in general.
\end{lemma}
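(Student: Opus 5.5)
The statement is a non-implication, so the plan is to exhibit one explicit configuration of honest inputs and one point $y$ with $y \in \MEB$ but $y \notin \TB$. This mirrors the proof that \MEB validity does not imply convex validity. The key geometric fact is that the trusted box collapses in every coordinate along which the honest vectors do not vary, whereas the \MEB is always full-dimensional. Any honest configuration spanning a proper affine subspace that is not coordinate-generic will therefore work.

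\textbf{Step 1: the configuration.} I will work in $\mathbb{R}^2$ (embedding in higher dimensions by padding with zero coordinates) and reuse the previous example. Place $\frac{|H|}{2}$ honest nodes at $(0,0)$ and the remaining $\frac{|H|}{2}$ at $(0,1)$.

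\textbf{Step 2: compute both regions.}
\begin{itemize}
\item The trusted box is $\TB=\{0\}\times[0,1]$, a degenerate rectangle, since the first coordinate of every honest vector is $0$.
\item The \MEB has center $(0,\frac12)$ and radius $r^*=\frac12$. It is determined by the two distinct points, because a segment's minimum enclosing ball is the ball with that segment as its diameter.
\end{itemize}

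\textbf{Step 3: exhibit the witness.} Take $y=(\frac12,\frac12)$. Then $\|y-C^*_{\MEB}\|_2=\frac12=r^*$, so $y\in\MEB$, while $y_1=\frac12\neq 0$, so $y\notin\TB$. An output satisfying \MEB validity can therefore violate box validity.

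\textbf{Strengthening and the main difficulty.} Scaling the picture in the first coordinate does not enlarge the box, so the violation is not marginal. For instance, every point of the \MEB with nonzero first coordinate lies outside \TB.

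If one also wants the example to survive "in general" rather than only in degenerate inputs, a full-dimensional box can be used instead. Take honest points $(1,0)$, $(-1,0)$, $(0,1)$, $(0,-1)$. Then the \MEB is the unit disk and \TB is $[-1,1]^2$, so here $\MEB\subseteq\TB$ and this attempt fails. Rotating the configuration by $45^\circ$ repairs it: the honest points $(\pm\frac{1}{\sqrt2},\pm\frac{1}{\sqrt2})$ on the unit circle give $\TB=[-\frac{1}{\sqrt2},\frac{1}{\sqrt2}]^2$, while the \MEB is still the unit disk and contains $(1,0)\notin\TB$.

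The only real care point is verifying the \MEB center and radius. For the square configuration this follows because the four points are cocircular and antipodal pairs force radius at least $1$. I expect no genuine obstacle beyond stating the example cleanly.
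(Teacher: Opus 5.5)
Your proof is correct and follows the paper's argument: the same configuration with honest clusters at $(0,0)$ and $(0,1)$, where the trusted box collapses to the segment $\{0\}\times[0,1]$ while the \MEB is a full-dimensional disc of radius $\frac12$ (you compute the radius correctly), and your witness $y=(\frac12,\frac12)$ makes the non-containment explicit. Your rotated-square example is a valid addition that the paper does not give; it shows the non-implication persists even when \TB is full-dimensional.
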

\begin{proof}
    Consider the following scenario: place $\frac{|H|}{2}$ of the honest nodes in (0,0) and the other $\frac{|H|}{2}$ in (0,1). The trusted box \TB is found on the line segment between the two points. However, \MEB is the unit disc centered in the midpoint of the line segment. \MEB is fully dimensional, whereas the \TB is collapsed to the line segment. Hence, $\MEB\nsubseteq \TB$, and thus $\MEB$ validity does not imply box validity.

\end{proof}
Note that this result holds for any relaxation of the box validity if we define it analogously to the box validity.

\section{Conclusion}\label{sec:conclusion}

\noindent\textbf{Summary.} We introduced a new practical validity condition based on minimum enclosing ball of all non-faulty nodes. Our results show that exact \MEB validity still suffers from resilience limitations that depend on the geometry of the input. 
We therefore provided a natural relaxation of the validity condition, where the radius of the minimum enclosing ball is increased by a constant factor $c$. Further, we presented an optimal algorithm for $c$-\MEB validity and proved that its relaxation ratio can be upper bounded by $c< \sqrt{2}$, if $n>2t$. Beside the optimal algorithm, we analyzed existing algorithms such as MDA, geometric median and medoid, and showed that they satisfy $c$-\MEB validity, where $c$ is at most constant, supporting the fact that these aggregation methods perform well in practice. Finally, we provide a systematic comparison between \MEB and existing validity conditions. We believe that \MEB validity provides a useful foundation for designing and analyzing robust aggregation methods that are both theoretically sound and practically effective in FL.

\noindent\textbf{Limitations.} There are many interesting avenues for future research. 
One question concerns computational efficiency. The optimal MinMax-\MEB rule requires computing candidate minimum enclosing balls for all subsets of size $n-t$, so the exact implementation becomes expensive when $t$ is large.
%, and is therefore combinatorial in $t$. 
Practical rules such as medoid, geometric median or MDA are more efficient, but achieve weaker relaxation factors. 
%This suggests a trade-off between computational cost and the strength of the validity guarantee.
Moreover, our model assumes a trusted server, reliable communication, Euclidean geometry and a known upper bound $t$ on the number of Byzantine clients. We view extending \MEB validity to fully decentralized FL as an important direction for future work.

Another limitation is that several of our relaxation bounds are not tight. For MinMax-\MEB, the bound $c<\sqrt{2}$ is tight in the two-dimensional case, but the optimal factor may be even better in higher dimensions. Similarly, for MDA, medoid and geometric median, it is not clear whether our bounds are tight. Closing these gaps is an interesting geometric question.
Additionally, one may ask how the relation is between multiplicative and additive relaxation of \MEB validity. One can define additive relaxation for \MEB validity similarly to~\cite{xiang_et_al:LIPIcs.OPODIS.2016.26}. Observe that the upper bound from Theorem~\ref{thm:upperBoundOpt} would also hold for the additive definition. The upper-bound construction from Theorem~\ref{thm:upperBoundOpt} also gives an additive guarantee, but the two relaxations may behave differently when the candidate minimum enclosing balls have very different radii. Understanding when multiplicative or additive relaxation is more appropriate for practical aggregation remains an interesting
direction for future work.

%This work also leaves a few geometric questions open, such as how to compute a tight upper bound on the relaxation factor $c$ of an optimal algorithm, or how to find lower bounds for known aggregation methods from the literature.

% maybe: The computational bottleneck of MinMax-MEB is the enumeration of all subsets of size n−t. However, the structure of minimum enclosing balls suggests that only a small number of extremal subsets are relevant in practice. In particular, MEBs are defined by at most d+1 support points, and many candidate balls are dominated by others with smaller radius. This enables several practical optimizations, including dominance pruning, randomized subset sampling, and heuristics based on extremal configurations. Exploring efficient approximations of MinMax-MEB remains an important direction for future work.

\bibliographystyle{unsrt}
\bibliography{literature}

%%%%%%%%%%%%%%%%%%%%%%%%%%%%%%%%%%%%%%%%%%%%%%%%%%%%%%%%%%%%
\newpage
\appendix

\end{document}